\definecolor{grey}{rgb}{0.25, 0.25, 0.28}
\newcommand{\Prob}{\mathbb{P}}
\newcommand{\Probf}{\mathbf{P}}
\newcommand{\Exp}{\mathbb{E}}
\newcommand{\Expf}{\mathbf{E}}
\newcommand{\bbR}{\mathbb{R}}
\newcommand{\bbN}{\mathbb{N}}
\newcommand{\class}[1]{\mathcal{#1}}
\newcommand{\eqdef}{\vcentcolon=}
\newcommand{\parent}[1]{\left( #1 \right)}
\newcommand{\norm}[1]{\left\lVert#1\right\rVert}
\newcommand{\normin}[1]{\lVert#1\rVert}
\newcommand{\abs}[1]{\left\lvert #1\right\rvert}
\newcommand{\absin}[1]{\lvert #1\rvert}
\newcommand{\enscond}[2]{\left\{ #1 \, : \, #2\right\}}
\newcommand{\ind}[1]{{\bf 1}_{\left\{#1\right\}}}
\newcommand{\data}{\class{D}}
\DeclarePairedDelimiter{\ceil}{\lceil}{\rceil}
\DeclareMathOperator*{\argmin}{arg\,min}
\DeclareMathOperator*{\Var}{Var}
\newcommand{\hp}{\hat{p}}
\newcommand{\hf}{\hat{f}}
\newcommand{\hnu}{\hat{\nu}}
\newcommand{\ie}{{\em i.e.,~}}
\newcommand{\eg}{{\em e.g.,~}}
\newcommand{\wrt}{{\em w.r.t.~}}
\newcommand{\iid}{{\rm i.i.d.~}}
\newcommand{\simiid}{\overset{\text{\iid}}{\scalebox{1.1}[1]{$\sim$}}}
\newtheorem{theorem}{Theorem}[section]
\newtheorem{definition}[theorem]{Definition}
\newtheorem{lemma}[theorem]{Lemma}
\newtheorem{remark}[theorem]{Remark}
\newtheorem*{theorem*}{Theorem}
\newtheorem*{proposition*}{Proposition}
\newtheorem*{lemma*}{Lemma}
\title{Fair Regression with Wasserstein Barycenters}
\author{Evgenii Chzhen$^{1}$, Christophe Denis$^2$, Mohamed Hebiri$^2$,
{\bf Luca Oneto$^3$, and Massimiliano Pontil$^{4,5}$}\\
{\small $^1$LMO, Université Paris-Saclay, CNRS, Inria, $^2$LAMA, Universit\'e Gustave Eiffel,}\\
{\small $^3$DIBRIS, University of Genoa,
 $^4$Istituto Italiano di Tecnologia, $^5$University College London}}
\begin{document}
\maketitle
\begin{abstract}
    We study the problem of learning a real-valued function that satisfies the Demographic Parity constraint. It demands the distribution of the predicted output to be independent of the sensitive attribute. We consider the case that the sensitive attribute is available for prediction. We establish a connection between fair regression and optimal transport theory, based on which we derive a close form expression for the optimal fair predictor.  Specifically, we show that the  
    distribution of this optimum is the Wasserstein barycenter of the distributions induced by the standard regression function on the sensitive groups. This result offers an intuitive interpretation of the optimal fair prediction and suggests a simple post-processing algorithm to achieve fairness. We establish risk and distribution-free fairness guarantees for this procedure. Numerical experiments indicate that our method is very effective in learning fair models, with a relative increase in error rate that is inferior to the relative gain in fairness.
\end{abstract}
\section{Introduction}
\label{sec:1}
A central goal of algorithmic fairness is to ensure that sensitive information does not “unfairly”
influence the outcomes of learning algorithms. For example, if we wish to predict the salary of an applicant or the grade of a university student, we would like the algorithm to not unfairly use additional sensitive information such as gender or race.
Since today's real-life datasets often contain discriminatory bias, standard machine learning methods behave unfairly.
Therefore, a substantial effort
is being devoted in the field to designing methods that satisfy ``fairness''  requirements, while still optimizing prediction performance, see for example \cite{barocas-hardt-narayanan,calmon2017optimized,chierichetti2017fair,donini2018empirical,dwork2018decoupled,hardt2016equality,jabbari2016fair,joseph2016fairness,kilbertus2017avoiding,kusner2017counterfactual,lum2016statistical,yao2017beyond,zafar2017fairness,zemel2013learning,zliobaite2015relation} and references therein.

In this paper we study the problem of learning a real-valued regression function which among those complying with the Demographic Parity fairness constraint, minimizes
the mean squared error. Demographic Parity requires the probability distribution of the predicted output to be independent of the sensitive attribute and has been used extensively in the literature, both in the context of classification and regression~ \cite{agarwal2019fair,chiappa2020general,gordaliza2019obtaining,jiang2019wasserstein,oneto2019general}.
%
In this paper we consider the case that the sensitive attribute is available for prediction. Our principal result is to show that the distribution of the optimal fair predictor is the solution of a Wasserstein barycenter problem between the distributions induced by the unfair regression function on the sensitive groups. This result builds a bridge between fair regression and optimal transport,~\cite[see \eg][]{villani2003topics, santambrogio2015optimal}.


We illustrate our result with an example. Assume that $X$ represents a \text{candidate's skills}, $S$ is a binary attribute representing two groups of the population (\eg \text{majority} or \text{minority}), and $Y$ is the \text{current market salary}. Let $f^*(x,s)= \Exp[Y | X {=} x, S {=} s]$  be the regression function, that is, the optimal prediction of the salary currently in the market for candidate $(x,s)$. Due to bias present in the underlying data distribution, the induced distribution of market salary predicted by $f^*$ varies across the two groups. We show that the optimal fair prediction $g^*$ transforms the regression function $f^*$ as
\begin{align*}
g^*(x, s) = p_s f^*(x,s) + (1-p_s) t^*(x,s) \enspace ,
\end{align*}
where $p_s$ is the frequency of group $s$ and the correction $t^*(x,s)$ is determined so that the \emph{ranking} of $f^*(x,s)$ relative to the distribution of $X|S=s$ for group $s$ (\eg minority) is the same as the {ranking} of $t^*(x,s)$ relative to the distribution of the group $s' \neq s$ (\eg majority).
We elaborate on this example after Theorem~\ref{thm:Oracle} and in Figure \ref{fig:illustration}.
The above expression of the optimal fair predictor naturally suggests
a simple post-processing estimation procedure, where we first estimate $f^*$ and then transform it to get an estimator of $g^*$.
Importantly, the transformation step involves only unlabeled data since it requires estimation of cumulative distribution functions.


{\bf Contributions and organization.}
In summary we make the following contributions. First, in Section \ref{sec:2} we derive the expression for the optimal function which minimizes the squared risk under Demographic Parity constraints (Theorem \ref{thm:Oracle}).
This result establishes a connection between fair regression and the problem of Wasserstein barycenters, which allows to develop an intuitive interpretation of the optimal fair predictor.
Second, based on the above result, in Section \ref{sec:estimator_and_proofs} we propose a post-processing procedure that can be applied on top of any off-the-shelf estimator for the regression function, in order to transform it into a fair one.
Third, in Section \ref{sec:statistical_analysis} we show that this post-processing procedure yields a fair prediction independently from the base estimator and the underlying distribution (Proposition~\ref{prop:distribution_free_fairness}).
Moreover, finite sample risk guarantees are derived under additional assumptions on the data distribution provided that the base estimator is accurate (Theorem~\ref{thm:rate}).
Finally, Section \ref{sec:exp} presents a numerical comparison of the proposed method \wrt the state-of-the-art.


{\bf Related work.} Unlike the case of fair classification, fair regression has received limited attention to date; we are only aware of few works on this topic that are supported by learning bounds or consistency results for the proposed estimator~\cite{agarwal2019fair,oneto2019general}. Connections between algorithmic fairness and Optimal Transport, and in particular the problem of Wasserstein barycenters, has been studied in \cite{chiappa2020general,gordaliza2019obtaining,jiang2019wasserstein,wang2019repairing} but mainly in the context of classification. These works are distinct from ours, in that they do not show the link between the optimal fair regression function and Wasserstein barycenters. Moreover, learning bounds are not addressed therein. Our distribution-free fairness guarantees share similarities with contributions on prediction sets~\cite{lei2013distribution,lei2014distribution} and conformal prediction literature~\cite{vovk2005algorithmic,zeni2020conformal} as they also rely on results on rank statistics.
Meanwhile, the risk guarantee that we derive, combines deviation results on Wasserstein distances in one dimension~\cite{Bobkov_Ledoux16} with peeling ideas developed in~\cite{Audibert_Tsybakov07}, and classical theory of rank statistics~\cite{van2000asymptotic}.

{\bf Notation.} For any positive integer $N \in \bbN$ we denote by $[N]$ the set $\{1, \ldots, N\}$. For $a, b \in \bbR$ we denote by $a \wedge b$ (\emph{resp.} $a \vee b$) the minimum (\emph{resp.} the maximum) between $a$ and $b$. For two positive real sequences $a_n, b_n$ we write $a_n \lesssim b_n$ to indicate that there exists a constant $c$ such that $a_n \leq c b_n$ for all $n$. For a finite set $\class{S}$ we denote by $|\class{S}|$ its cardinality.
The symbols $\Expf$ and $\Probf$ stand for generic expectation and probability.
For any univariate probability measure $\mu$, we denote by $F_{\mu}$ its Cumulative Distribution Function (CDF) and by $Q_{\mu}: [0, 1] \to \bbR$ its quantile function ({\em a.k.a.} generalized inverse of $F_{\mu}$) defined for all $t \in (0, 1]$ as $Q_{\mu}(t) = \inf\enscond{y \in \bbR}{F_{\mu}(y) {\geq} t}$ with $Q_{\mu}(0) = Q_{\mu}(0+)$.
For a measurable set $A \subset \bbR$ we denote by $U(A)$ the uniform distribution on $A$.

\section{The problem}
\label{sec:2}
In this section we introduce the fair regression problem and present our derivation for the optimal fair regression function alongside its connection to Wasserstein barycenter problem. We consider the general regression model
\begin{equation}
        Y = f^*(X, S) + \xi\enspace,
\label{eq:model}
\end{equation}
where $\xi \in \bbR$ is a centered random variable, $(X, S) \sim \Prob_{X, S}$ on $\bbR^d \times \class{S}$, with $\abs{\class{S}} < \infty$, and $f^*: \bbR^d \times \class{S} \to \bbR$ is the regression function minimizing the squared risk. {Let $\Prob$ be the joint distribution of $(X, S, Y)$.}
For any prediction rule $f : \bbR^d \times \class{S} \to \bbR$, we denote by $\nu_{f | s} $ the distribution of $f(X, S) | S = s$, that is, the Cumulative Distribution Function (CDF) of $\nu_{f | s}$ is given by
\begin{align}
    \label{eq:push_forward}
    F_{\nu_{f | s}}(t) = \Prob(f(X, S) \leq t | S = s)\enspace,
\end{align}
to shorten the notation we will write $F_{f | s}$ and $Q_{f | s}$ instead of $F_{\nu_{f | s}}$ and $Q_{\nu_{f | s}}$ respectively.
\begin{restatable}[Wasserstein-2 distance]{definition}{wasser}
    \label{def:wass}
    Let $\mu$ and $\nu$ be two univariate probability measures. The squared Wasserstein-2 distance between $\mu$ and $\nu$ is defined as
    \begin{align*}
        \class{W}_2^2(\mu,\nu) = \inf_{\gamma \in \Gamma_{\mu, \nu}}\int \abs{x - y}^2 d\gamma(x, y)\enspace,
    \end{align*}
    where $\Gamma_{\mu, \nu}$ is the set of distributions (couplings) on $\bbR \times \bbR$ such that for all $\gamma \in \Gamma_{\mu, \nu}$ and all measurable 
    {sets} $A, B \subset \bbR$ it holds that $\gamma(A \times \bbR) = \mu(A)$ and $\gamma(\bbR \times B) = \nu(B)$.
\end{restatable}
In this work we use the following definition of (strong) Demographic Parity, which was previously used in the context of regression by~\cite{agarwal2019fair,chiappa2020general,jiang2019wasserstein}.
\begin{definition}[Demographic Parity]
    \label{def:demographic_parity}
    A prediction (possibly randomized) $g: \bbR^d \times \class{S} \to \bbR$ is \emph{fair} if, for every $s,s' \in \class{S}$
    \begin{align*}
        \sup_{t \in \bbR}\Big|\Probf(g(X, S) \leq t | S = s) - \Probf(g(X, S) \leq t | S = s')\Big| = 0\enspace.
    \end{align*}
\end{definition}
Demographic Parity requires the Kolmogorov-Smirnov distance between $\nu_{g | s}$ and $\nu_{g | s'}$ to vanish for all $s, s'$.
Thus, if $g$ is fair, $\nu_{g | s}$ does not depend on $s$ and to simplify the notation we will write $\nu_{g}$.

{Recall the model in Eq.~\eqref{eq:model}. Since the noise has zero mean, the minimization of $\Exp(Y - g(X, S))^2$ over $g$ is equivalent to the minimization of $\Exp(f^*(X, S) - g(X, S))^2$ over $g$.}
\begin{figure}[t!]
\centering
\includegraphics[height=0.32\textwidth]{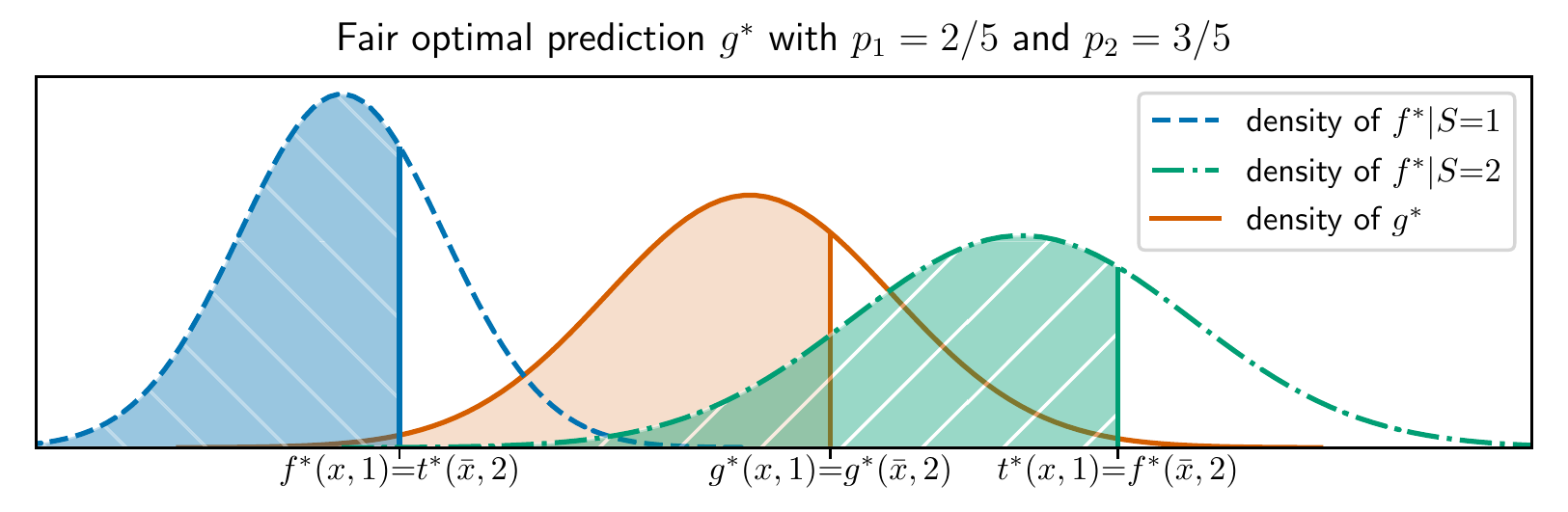}%
\caption{{\small For a new point $(x, 1)$, the value $t^*(x,1)$ is chosen such that the shaded \texttt{Green Area (//)}~$= \Prob(f^*(X, S) \leq t^*(x, 1) | S = 2)$ equals to the shaded \texttt{Blue Area (\textbackslash\textbackslash)} $= \Prob(f^*(X, S) \leq f^*(x, 1) | S = 1)$. The final prediction $g^*(x, 1)$ is a convex combination of $f^*(x, 1)$ and $t^*(x, 1)$. The same is done for $(\bar x, 2)$}.}
\label{fig:illustration}
\end{figure}
The next theorem shows that the optimal fair predictor for an input $(x,s)$ is obtained by a nonlinear transformation of the vector $(f^*(x, s))_{s=1}^{|\class{S}|}$ 
that is linked to a Wasserstein barycenter problem~\cite{agueh2011barycenters}.
\begin{restatable}[Characterization of fair optimal prediction]{theorem}{oracle}
\label{thm:Oracle}
Assume, for each $s \in \class{S}$, that the univariate measure $\nu_{f^* | s}$ has a density and let $p_s = \Prob(S = s)$. Then,
\begin{align*}
    \min_{g~\emph{is~fair}}\Exp(f^*(X, S) - g(X, S))^2 = \min_{\nu} \sum_{s \in \class{S}}p_s\class{W}_2^2(\nu_{f^*|s}, \nu)\enspace.
\end{align*}
Moreover, if $g^*$ and $\nu^*$ solve the l.h.s. and the r.h.s. problems respectively, then $\nu^* = \nu_{g^*}$ and
\begin{align}
    \label{eq:oracle_formula}
    g^*(x, s)
    = \parent{\sum_{s' \in \class{S}}p_{s'}Q_{f^* | s'}} \circ F_{f^* | s} \left( f^*(x, s) \right)\enspace.
\end{align}
\end{restatable}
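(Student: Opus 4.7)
The plan is to reduce the problem to a two-step minimization: first optimize over the distribution $\nu$ a fair predictor can achieve, then over the specific predictor realizing that distribution. The bridge to optimal transport comes from the fact that when $g$ is fair, the conditional law $\nu_{g\mid s}$ equals some $\nu$ independent of $s$, and the pair $(f^*(X,S), g(X,S))$ conditional on $S{=}s$ forms a coupling of $\nu_{f^*\mid s}$ and $\nu$.

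First I would prove the lower bound. For any fair $g$ with common law $\nu=\nu_g$, conditioning on $S=s$ and applying the definition of $\mathcal{W}_2^2$ as an infimum over couplings gives
\begin{align*}
\Exp\bigl[(f^*(X,S)-g(X,S))^2 \,\big|\, S=s\bigr] \;\geq\; \mathcal{W}_2^2(\nu_{f^*\mid s}, \nu).
\end{align*}
Multiplying by $p_s$, summing over $s$, and then minimizing over $\nu$ yields $\min_g \Exp(f^*{-}g)^2 \geq \min_\nu \sum_s p_s \mathcal{W}_2^2(\nu_{f^*\mid s}, \nu)$.

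Next I would prove the matching upper bound by exhibiting an explicit fair predictor. Let $\nu^*$ solve the barycenter problem on the right-hand side. Because each $\nu_{f^*\mid s}$ admits a density, the one-dimensional optimal transport map from $\nu_{f^*\mid s}$ to $\nu^*$ is the monotone rearrangement $T_s = Q_{\nu^*}\circ F_{f^*\mid s}$, and it saturates the Wasserstein distance: $\int (x-T_s(x))^2\, d\nu_{f^*\mid s}(x) = \mathcal{W}_2^2(\nu_{f^*\mid s},\nu^*)$. Define
\begin{align*}
g^*(x,s) \;=\; T_s(f^*(x,s)) \;=\; Q_{\nu^*}\circ F_{f^*\mid s}\bigl(f^*(x,s)\bigr).
\end{align*}
Since $f^*\mid s \sim \nu_{f^*\mid s}$, the push-forward property gives $g^*(X,S)\mid S{=}s \sim \nu^*$ for every $s$, so $g^*$ is fair with $\nu_{g^*}=\nu^*$. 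Evaluating its conditional risk on group $s$ gives exactly $\mathcal{W}_2^2(\nu_{f^*\mid s},\nu^*)$, and summing against $p_s$ closes the duality.

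It remains to put $\nu^*$ in closed form. This is the classical one-dimensional barycenter identity of Agueh--Carlier: on the real line, the quantile function of the Wasserstein-2 barycenter of $\{\nu_{f^*\mid s}\}_{s\in\mathcal{S}}$ with weights $\{p_s\}$ is $Q_{\nu^*} = \sum_{s'\in\mathcal{S}} p_{s'}\,Q_{f^*\mid s'}$. This can be derived directly by parametrizing couplings through a uniform random variable on $[0,1]$, using that the comonotone coupling $(Q_{\nu_{f^*\mid s}}(U), Q_\nu(U))$ is optimal for each $s$, and minimizing pointwise in $U$. Substituting this expression for $Q_{\nu^*}$ into the formula for $g^*$ yields exactly Eq.~\eqref{eq:oracle_formula}.

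The main obstacle I anticipate is not the lower bound (which is a routine marginal argument) but justifying the two 1D transport facts I need in full generality: that the monotone rearrangement is the optimal transport map between measures on $\mathbb{R}$ with a density, and that the 1D barycenter's quantile function is the weighted average of the input quantile functions. Both are standard (see Santambrogio or Villani), and the density assumption on each $\nu_{f^*\mid s}$ is precisely what makes $F_{f^*\mid s}(f^*(X,S))\mid S{=}s$ uniform on $[0,1]$, which is what ties the push-forward argument together.
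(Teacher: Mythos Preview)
Your proposal is correct and follows essentially the same route as the paper: the coupling lower bound $\Exp[(f^*-g)^2\mid S=s]\geq\class{W}_2^2(\nu_{f^*\mid s},\nu_g)$, the explicit construction $g^*(x,s)=Q_{\nu^*}\circ F_{f^*\mid s}(f^*(x,s))$ via the monotone transport map (the paper's Theorem~\ref{thm:Brenier}), and the one-dimensional barycenter identity $Q_{\nu^*}=\sum_{s'}p_{s'}Q_{f^*\mid s'}$ (the paper's Lemma~\ref{lem:barycenter_expression}). The only cosmetic difference is that the paper first takes an arbitrary fair minimizer $\bar g$ and builds an auxiliary $\tilde g$ to show $\sum_s p_s\class{W}_2^2(\nu_{f^*\mid s},\nu_{\bar g})=\Exp(f^*-\bar g)^2$ before moving to $\nu^*$, whereas you go straight to the two inequalities; your version is a little more streamlined but not materially different.
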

After the completion of this work, we became aware of a similar result derived independently by~\cite{gouic2020price}. Their result applies to a general vector-valued regression problems, under assumptions that are similar to ours. They have proposed an estimator of the optimal fair prediction which leverages the connection of the fair regression and the problem of Wasserstein barycenters and derived asymptotic risk consistency. In contrast, we focus on the one dimensional case, and we provide a plug-in type estimator for which we show distribution-free fairness guarantees and finite-sample bounds for the risk.
The proof of Theorem~\ref{thm:Oracle} relies on the classical characterization of optimal coupling in one dimension (stated in Theorem~\ref{thm:Brenier} in the appendix) of the Wasserstein-2 distance.
We show that a minimizer $g^*$ of the $L_2$-risk can be used to construct $\nu^*$ and vice-versa, given $\nu^*$, we leverage a well-known expression for one dimensional Wasserstein barycenter (see \eg \cite[Section 6.1]{agueh2011barycenters} and Lemma \ref{lem:barycenter_expression} in the appendix) and construct $g^*$.

{\bf The case of binary protected attribute.}
Let us unpack Eq.~\eqref{eq:oracle_formula} in the case that $\class{S} = \{1, 2\}$, assuming {\em w.l.o.g.} that $p_2 \geq p_1$.
Theorem~\ref{thm:Oracle} states that the fair optimal prediction $g^*$ is defined for all individuals $x \in \bbR^d$ in the first group as
\begin{align*}
    g^*(x, 1) &=
    p_{1}f^*(x, 1) + p_{2} t^*(x, 1),\,\, \text{with}\,\,
    t^*(x, 1) = \inf\enscond{t \in \bbR}{F_{f^* | 2}(t) \geq F_{f^* | 1}(f^*(x, 1))}\enspace,
\end{align*}
and likewise for the second group.
This form of the optimal fair predictor, and more generally Eq.~\eqref{eq:oracle_formula}, allows us to {understand} the decision made by $g^*$ at individual level.
If we interpret $(x, s)$ as the candidate's CV and candidate's group respectively, and $f^*(x, s)$ as the current market salary (which might be discriminatory), then the fair optimal salary $g^*(x, s)$ is a convex combination of the market salary $f^*(x, s)$ and the adjusted salary $t^*(x, s)$, which is computed as follows.
{
If say $s{=}1$, we first compute the fraction of individuals from the first group whose market salary is at most $f^*(x, 1)$, that is, we compute $\Prob(f^*(X, S) \leq f^*(x, 1) | S {=} 1)$.
Then, we find a candidate $\bar x$ in group $2$, such that the fraction of individuals from the second group whose market salary is at most $f^*(\bar{x}, 2)$ is the same, that is, $\bar{x}$ is chosen to satisfy $\Prob(f^*(X, S) \leq f^*(\bar x, 2) | S {=} 2) = \Prob(f^*(X, S) \leq f^*(x, 1) | S {=} 1)$. Finally, the market salary of $\bar{x}$ is exactly the adjustment for $x$, that is, $t^*(x, 1) = f^*(\bar{x}, 2)$.
}
This idea is illustrated in Figure~\ref{fig:illustration} and leads to the following philosophy:
{if candidates $(x, 1)$ and $(\bar x, 2)$ share the same group-wise market salary ranking, }
then
{they} should receive the same salary determined by the fair prediction $g^*(x, 1) = g^*(\bar x, 2) = p_{1}f^*(x, 1) + p_{2}f^*(\bar x, 2)$.
At last, note that Eq.~\eqref{eq:oracle_formula} allows to understand the (potential) amount of extra money that we need to pay in order to satisfy fairness. While the unfair decision made with $f^*$ costs $f^*(x, 1) {+} f^*(\bar x, 2)$ for the salary of $(x, 1)$ and $(\bar x, 2)$, the fair decision $g^*$ costs $2(p_1 f^*(x, 1) {+} p_2f^*(\bar{x}, 2))$.
Thus, the extra (signed) salary that we pay is
$\Delta = (p_2 {-} p_1)(f^*(\bar{x}, 2) {-} f^*(\bar{x}, 1))$.
Since, $p_2 {\geq} p_1$, $\Delta$ will be positive whenever the candidate ${\bar x}$ from the majority group gets higher salary according {to} $f^*$, and negative otherwise.
{We believe that
the expression Eq.~\eqref{eq:oracle_formula} could be the starting point for further more applied work on algorithmic fairness.
}

\section{General form of the estimator}
\label{sec:estimator_and_proofs}
In this section we propose an estimator of the optimal fair predictor $g^*$ that relies on the plug-in principle. The  expression~\eqref{eq:oracle_formula} of $g^*$ suggests that we only need estimators for the regression function $f^*$, the proportions $p_s$, as well as the {CDF} $F_{f^* | s}$ and the quantile function $Q_{f^* | s}$, for all $s\in \class{S}$. While
the estimation of $f^*$ needs labeled data, 
all the other quantities rely only on $\Prob_S$, $\Prob_{X | S}$ and $f^*$, therefore {\it unlabeled} data with an estimator of $f^*$ suffices.
{Thus, given a base estimator of $f^*$, our post-processing algorithm will require only unlabeled data.}\\
For each $s \in \class{S}$ let $\class{U}^s {=} \{X^{s}_i\}_{i = 1}^{N_s} \simiid \Prob_{X | S = s}$ be a group-wise unlabeled sample.
In the following for simplicity we assume that $N_s$ are \emph{even} for all $s \in \class{S}$\footnote{Since we are ready to sacrifice a factor $2$ in our bounds, this assumption is without loss of generality.}.
Let $\class{I}_0^{s}, \class{I}_1^{s} \subset [N_s]$ be any fixed partition of $[N_s]$ such that $|\class{I}_0^{s}| {=} |\class{I}_1^{s}| {=} N_s / 2$ and $\class{I}_0^{s} \cup \class{I}_1^{s} {=} [N_s]$.
For each $j \in \{0, 1\}$ we let $\class{U}^s_j {=} \enscond{X^{s}_i \in \class{U}^s}{i \in \class{I}^s_j}$ be the
restriction of $\class{U}^s$ to  $\class{I}^s_j$.
{We use $\class{U}^s_0$ to estimate ${Q}_{f | s}$ and $\class{U}^s_1$ to estimate ${F}_{f | s}$.}
For each $f : \bbR^d {\times} \class{S} \to \bbR$ and each $s \in \class{S}$, we estimate $\nu_{f | s}$ by
\begin{equation}
    \label{eq:empirical_jittered_cdf}
     \hat{\nu}^0_{f | s} = \frac{1}{|\class{I}_0^s|}\sum_{i \in \class{I}_0^s}\delta\big({f(X^{s}_i, s) + \varepsilon_{is}} - \cdot\big)\quad\text{and}\quad\hat{\nu}^1_{f | s} = \frac{1}{|\class{I}_1^s|}\sum_{i \in \class{I}_1^s}\delta\big({f(X^{s}_i, s) + \varepsilon_{is}} - \cdot\big)\enspace,
\end{equation}

where $\delta$ is the Dirac measure and all $\varepsilon_{is} \simiid U([-\sigma, \sigma])$, for some positive $\sigma$ set by the user.
Using the estimators in Eq.~\eqref{eq:empirical_jittered_cdf}, we define for all $f : \bbR^d {\times} \class{S} \to \bbR$ estimators of $Q_{f | s}$ and of $F_{f | s}$ as
\begin{align}
    \label{eq:empirical_jittered_cdf1}
    \hat{Q}_{f | s} \equiv Q_{\hat{\nu}^0_{f | s}}\quad\text{and}\quad\hat{F}_{f | s} \equiv {F}_{\hat{\nu}^1_{f | s}}\enspace.
\end{align}
That is, $\hat{F}_{f | s}$ and $\hat{Q}_{f | s}$ are the empirical CDF and empirical quantiles of $(f(X, S) {+} \varepsilon) | S {=} s$ based on $\{f(X^{s}_i, s) {+} \varepsilon_{is}\}_{i \in \class{I}_1^s}$ and $\{f(X^{s}_i, s) {+} \varepsilon_{is}\}_{i \in \class{I}_0^s}$ respectively.{}
The noise $\varepsilon_{is}$ serves as a smoothing random variable, since for all $s \in \class{S}$ and $i \in [N_s]$ the random variables $f(X^{s}_i, s) {+} \varepsilon_{is}$ are \iid continuous for any $\Prob$ and $f$.
In contrast, $f(X^{s}_i, s)$ might have atoms resulting in a non-zero probability to observe ties in $\{f(X^{s}_i, s)\}_{i \in \class{I}_j^s}$.
This step is also known as \emph{jittering}, often used for data visualization~\cite{chambers2018graphical} for tie-breaking. It plays a crucial role in the distribution-free fairness guarantees that we derive in Proposition~\ref{prop:distribution_free_fairness}; see the discussion thereafter.

\begin{algorithm}[t]
    \DontPrintSemicolon
   \caption{Procedure to evaluate estimator in Eq.~\eqref{eq:proposed_estimator}}
   \label{alg:preprocessing}
   \SetKwInput{Input}{Input}
   \SetKwInOut{Output}{Output}

   \Input{new point: $({x}, {s})$; base estimator $\hat f$; unlabeled data $\class{U}^1, \ldots, \class{U}^{|\class{S}|}$;\hspace{1cm} \\  jitter parameter $\sigma$; empirical frequencies $\hat p_1, \ldots, \hat p_{|S|}$}
   \Output{fair prediction $\hat g( x,  s)$ for the point $(x, s)$}
   \For(\tcp*[f]{data structure for Eq.\eqref{eq:empirical_jittered_cdf}}){ $s' \in \class{S}$}{
   $\class{U}_0^{s'},\class{U}_1^{s'} \leftarrow \mathtt{split\_in\_two}(\class{U}^{s'})$
   \tcp*{split unlabeled data into two equal parts}
    $\texttt{ar}_0^{s'} \leftarrow \big\{{\hat f(X, {s'}) {+} U([-\sigma,
    \sigma])}\big\}_{X \in \class{U}_{0}^{s'}}$,\quad
    $\texttt{ar}_1^{s'} \leftarrow \big\{\hat f(X, {s'}) {+} U([-\sigma, \sigma])\big\}_{X \in \class{U}_{1}^{s'}}$

  $\texttt{ar}_0^{s'} \leftarrow \mathtt{sort}\big(\texttt{ar}_0^{s'}\big)$,\quad$\texttt{ar}_1^{s'} \leftarrow \mathtt{sort}\big(\texttt{ar}_1^{s'}\big)$
  \tcp*{for fast evaluation of Eq.\eqref{eq:empirical_jittered_cdf1}
  }
   }
    {$k_{s} \leftarrow \mathtt{position}\parent{\hat f( x, s) {+} U([-\sigma, \sigma]),\,\, \texttt{ar}_1^{s}}$ }
     \tcp*{{evaluate $\hat{F}_{\hf | s}\left(\hf(x, s) + \varepsilon\right)$ in Eq.\eqref{eq:proposed_estimator}}}
    $\hat g(x, s) \leftarrow  \sum_{s' \in \class{S}}\hp_{s'} \times \texttt{ar}_0^{s'}\big[\ceil{N_{s'} k_{s}/{N_{ s}}}\big]$\tcp*{evaluation of Eq.\eqref{eq:proposed_estimator}
    }\;
\end{algorithm}
%
Finally, let $\class{A} {=} \{S_i\}_{i = 1}^N \simiid \Prob_S$ and for each $s \in \class{S}$ let $\hat p_s$ be the empirical frequency of $S {=} s$ evaluated on $\class{A}$.
Given a base estimator $\hf$ of $f^*$ constructed from $n$ labeled samples $\class{L} {=} \{(X_i, S_i, Y_i)\}_{i = 1}^{n} \simiid \Prob$, we define the final estimator $\hat g$ of $g^*$ for all $(x, s) \in \bbR^d {\times} \class{S}$ mimicking Eq.~\eqref{eq:oracle_formula} as
\begin{align}
    \label{eq:proposed_estimator}
    \hat g(x, s) = \parent{\sum_{s' \in \class{S}}\hat p_{s'} \hat{Q}_{\hf | s'} } \circ\hat{F}_{\hf | s}\left(\hf(x, s) + \varepsilon\right) \enspace,
\end{align}
where $\varepsilon \sim U([-\sigma, \sigma])$ is assumed to be independent from every other random variables.
\begin{remark}
In practice
one should use a very small value for $
\sigma$ (\eg $\sigma {=} 10^{-5}$), which does not alter the statistical quality of the base estimator $\hat f$ as indicated in Theorem~\ref{thm:rate}.
\end{remark}
A pseudo-code implementation of $\hat g$ in Eq.~\eqref{eq:proposed_estimator} is reported in Algorithm~\ref{alg:preprocessing}.
It requires two primitives:
$\mathtt{sort}(\texttt{ar})$ sorts the array $\texttt{ar}$ in an increasing order; $\mathtt{position}(a, \texttt{ar})$ which outputs the index $k$ such that the insertion of $a$ into $k$'th position in $\texttt{ar}$ preserves ordering (\ie $\texttt{ar}[k{-}1] \leq a < \texttt{ar}[k]$).
Algorithm~\ref{alg:preprocessing} consists of two for parts: in the for-loop we perform a preprocessing which takes $\sum_{s \in \class{S}}O(N_s \log N_s)$ time\footnote{It is assumed in this discussion that the time complexity to evaluate $\hat f$ is $O(1)$.} {since it involves sorting}; then, the evaluation of $\hat g$ on a new point $(x, s)$ is performed in $(\max_{s\in \class{S}}\log N_s)$ time {since it involves an element search in a sorted array}.
Note that the for-loop of Algorithm~\ref{alg:preprocessing} needs to be performed only once as this step is shared for any new $(x, s)$.

\section{Statistical analysis}
\label{sec:statistical_analysis}
In this section we provide a statistical analysis of the proposed algorithm.
We
first present in Proposition~\ref{prop:distribution_free_fairness} distribution-free finite sample fairness guarantees for post-processing of \emph{any} base learner with unlabeled data and then
we show in Theorem~\ref{thm:rate} that if the base estimator $\hat f$ is a good proxy for $f^*$, then under mild assumptions on the distribution $\Prob$, the processed estimator $\hat g$ in Eq.~\eqref{eq:proposed_estimator} is a good estimator of $g^*$ in Eq.~\eqref{eq:oracle_formula}.

\paragraph{Distribution free post-processing fairness guarantees.}
We derive two distribution-free results in Proposition~\ref{prop:distribution_free_fairness}, the first in Eq.~\eqref{eq:fair1} shows that the fairness definition is satisfied as long as we take the expectation over the data inside the supremum in Definition~\ref{def:demographic_parity}, while the second one in Eq.~\eqref{eq:fair2} bounds the expected violation of Definition~\ref{def:demographic_parity}.
\begin{restatable}[Fairness guarantees]{proposition}{fairness}    \label{prop:distribution_free_fairness}
    For any joint distribution $\Prob$ of $(X, S, Y)$, any base estimator $\hat f$ constructed on labeled data, and for all $s, s' \in \class{S}$, the estimator $\hat g$ defined in Eq.~\eqref{eq:proposed_estimator} satisfies
    \begin{align}
        &\sup_{t\in\bbR}\abs{\Probf(\hat g(X, S) \leq t | S {=} s) - \Probf(\hat g(X, S) \leq t | S {=} s')} \leq 2\left(N_s {\wedge} N_{s'} + 2\right)^{-1}\ind{N_s \neq N_{s'}}\label{eq:fair1}\\\
        &\Expf\sup_{t \in \bbR}\abs{\Probf(\hat g(X, S) \leq t |S {=} s ,\data) - \Probf(\hat g(X, S) \leq t |S {=} s', \data)} \leq 6 \left(N_s {\wedge} N_{s'} + 1\right)^{-1/2} \enspace.\label{eq:fair2}
    \end{align}
    where
    $\data = \class{L} \cup \class{A} \cup_{s \in \class{S}} \class{U}^s$ is the union of all available datasets.
\end{restatable}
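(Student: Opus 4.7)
For each $s \in \class{S}$, write $V_i^s := \hf(X_i^s, s) + \varepsilon_{is}$ for $i \in \class{I}_1^s$ and, on a fresh independent pair $(X, \varepsilon)$ with $X \mid S = s$ and $\varepsilon \sim U([-\sigma,\sigma])$, set $V := \hf(X, s) + \varepsilon$. Because the jitter is absolutely continuous, the $N_s/2 + 1$ variables $V, V_1^s, \ldots, V_{N_s/2}^s$ are \iid with a continuous law $\tilde\nu_s$ conditionally on $\hf$; this is the only structural property needed. Rewrite $\hat g(X, S) = G(\hat F_{\hf\mid s}(V))$ where $G := \sum_{s' \in \class{S}} \hp_{s'} \hat Q_{\hf\mid s'}$ is a non-decreasing step function depending on the randomness only through the $\sigma$-algebra $\class{F}$ generated by $\hf$, $\class{A}$ and $\bigcup_{s' \in \class{S}}(\class{U}_0^{s'}, \{\varepsilon_{is'}\}_{i \in \class{I}_0^{s'}})$, while $(\class{U}_1^s, \{\varepsilon_{is}\}_{i \in \class{I}_1^s}, X, \varepsilon)$ is independent of $\class{F}$.

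For \eqref{eq:fair1} I condition on $\class{F}$. By rank exchangeability (in the spirit of conformal prediction) applied to the \iid continuous sample $V, V_1^s, \ldots, V_{N_s/2}^s$, the rank of $V$ is uniform on $\{1,\ldots,N_s/2+1\}$, so $\hat F_{\hf\mid s}(V)$ is uniform on $\{0, 2/N_s, \ldots, 1\}$. Monotonicity of $G$ and a direct count give
\begin{align*}
\Probf\parent{\hat g(X, S) \leq t \mid S = s, \class{F}} = \frac{\floor{(N_s/2)\, u^*_t} + 1}{N_s/2 + 1}, \qquad u^*_t := \sup\enscond{u \in [0,1]}{G(u) \leq t},
\end{align*}
in which $u^*_t$ depends on $t$ and $G$ but \emph{not} on $s$. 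If $N_s = N_{s'}$ the two conditional CDFs coincide; otherwise, a short arithmetic argument---bounding $\absin{(\floor{mu}+1)(M+1) - (\floor{Mu}+1)(m+1)}$ by $M+1$ for integers $m \leq M$---yields the sharper-than-triangle pointwise inequality
\begin{align*}
\abs{\frac{\floor{mu}+1}{m+1} - \frac{\floor{Mu}+1}{M+1}} \leq \frac{1}{m+1}.
\end{align*}
Plugging $m = (N_s \wedge N_{s'})/2$, taking $\Expf$ over $\class{F}$ and moving $\sup_t$ outside delivers \eqref{eq:fair1}.

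For \eqref{eq:fair2} I enlarge the conditioning to $\class{F}' := \sigma\parent{\class{F}, \bigcup_{s \in \class{S}}\class{U}_1^s, \bigcup_{s \in \class{S}}\{\varepsilon_{is}\}_{i \in \class{I}_1^s}}$, rendering each $\hat F_{\hf\mid s}$ deterministic. Monotonicity of $G$ gives $\Probf(\hat g(X, S) \leq t \mid S = s, \class{F}') = \tilde\nu_s(\enscond{v}{\hat F_{\hf\mid s}(v) \leq u^*_t})$. Since $V \sim \tilde\nu_s$ is independent of $\hat F_{\hf\mid s}$, $\tilde\nu_s$ is continuous and $F_{\tilde\nu_s}(V) \sim U([0,1])$, the probability integral transform combined with a standard sandwich yields the pointwise bound $\absin{\tilde\nu_s(\enscond{v}{\hat F_{\hf\mid s}(v) \leq u^*_t}) - u^*_t} \leq \normin{\hat F_{\hf\mid s} - F_{\tilde\nu_s}}_\infty$, and similarly for $s'$. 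A triangle inequality followed by $\sup_t$ (which the right-hand side does not depend on) gives $\sup_t \abs{\Probf(\hat g \leq t \mid S{=}s, \class{F}') - \Probf(\hat g \leq t \mid S{=}s', \class{F}')} \leq \normin{\hat F_{\hf\mid s} - F_{\tilde\nu_s}}_\infty + \normin{\hat F_{\hf\mid s'} - F_{\tilde\nu_{s'}}}_\infty$. Since $\hat F_{\hf\mid s}$ is the empirical CDF of $N_s/2$ \iid samples from the continuous law $\tilde\nu_s$ (conditional on $\hf$), integrating the DKW tail bound produces $\Expf\normin{\hat F_{\hf\mid s} - F_{\tilde\nu_s}}_\infty \leq \sqrt{\pi/N_s}$; reducing back to a $\data$-measurable bound via the tower property and combining the two groupwise contributions yields \eqref{eq:fair2}. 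The main difficulty is conceptual bookkeeping---isolating the correct $\sigma$-algebra at each step so that $G$ and the empirical CDFs are frozen while the statistic of interest (rank or Kolmogorov--Smirnov deviation) retains its canonical law; once jittering guarantees continuity, everything reduces to classical rank and DKW inequalities.
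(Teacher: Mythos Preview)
Your proposal is correct and follows essentially the same approach as the paper: for \eqref{eq:fair1} both arguments reduce via monotonicity of $G=\sum_{s'}\hp_{s'}\hat Q_{\hf|s'}$ to the law of $\hat F_{\hf|s}(\hf(X^s,s)+\varepsilon)$, invoke rank uniformity of the jittered sample (Lemma~13.1 in van der Vaart), and compare two discrete uniforms on grids of size $N_s/2+1$ and $N_{s'}/2+1$; for \eqref{eq:fair2} both insert the true jittered CDF $F_{\tilde\nu_s}$, use the probability integral transform, and finish with DKW. The only cosmetic difference is that the paper first applies the generalized inverse $(\sum \hp_{s'}\hat Q_{\hf|s'})^{\leftarrow}$ to reduce to $\sup_{t\in(0,1)}$ with $k_s(t)=\lfloor t N_s/2\rfloor+1$ depending deterministically on $t$, whereas you keep the original threshold and carry a random $u_t^*$ through the conditioning; your sandwich $\lvert\tilde\nu_s(\{v:\hat F(v)\le u\})-u\rvert\le\lVert\hat F-F_{\tilde\nu_s}\rVert_\infty$ in \eqref{eq:fair2} is in fact slightly sharper than the paper's indicator bound (which picks up an extra factor of $2$). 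One small bookkeeping point: $\sigma(\data)$ contains $\class L$ but not the jitters, while your $\class F'$ contains the jitters but only $\hf$, so they are not nested; the tower reduction you invoke goes through once you note that $\hat g$ depends on $\class L$ only via $\hf$ and the test point is independent of $\class L$, so conditioning on $\data$ coincides with conditioning on $\sigma(\hf,\class A,\cup_s\class U^s)\subset\class F'$.
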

{
{Let us point out that this result does not require any assumption on the distribution $\Prob$ as well as on the base estimator $\hf$. This is achieved thanks to the jittering step in the definition of $\hat g$ in Eq.~\eqref{eq:proposed_estimator}, which artificially introduces continuity.
Continuity allows us to use results from the theory of rank statistics of exchangeable random variables to derive Eq.~\eqref{eq:fair1} as well as the classical inverse transform (see \eg~\cite[Sections 13 and 21]{van2000asymptotic})
combined with the Dvoretzky-Kiefer-Wolfowitz inequality~\cite{massart1990tight} to derive Eq.~\eqref{eq:fair2}.}
Since basic results on rank statistics and inverse transform are distribution-free as long as the underlying random variable is continuous, the guarantees in Eqs.~\eqref{eq:fair1}--\eqref{eq:fair2} are also distribution-free and can be applied on top of \emph{any} base estimator $\hf$.\\
{The bound in Eq.~\eqref{eq:fair1} might be surprising to the reader. }
Yet, let us emphasize that this bound holds because the expectation \wrt the data distribution is taken inside the supremum (since $\Probf$ stands for the joint distribution of \emph{all} random variables involved in $\hat g(X, S)$).
Similar proof techniques are also used in randomization inference via permutations~\cite{fisher1936design,hoeffding1952large}, conformal prediction~\cite{vovk2005algorithmic,lei2014distribution}, knockoff estimation~\cite{barber2019knockoff} to name a few.
{However, unlike the aforementioned contributions, the problem of fairness requires a non-trivial adaptation of these techniques.}
In contrast, Eq.~\eqref{eq:fair2} might be more appealing to the machine learning community as it controls the expected (over data) violation of the fairness constraint with standard parametric rate.
}


\paragraph{Estimation guarantee with accurate base estimator.}

In order to prove non-asymptotic risk bounds we require the following assumption on the distribution $\Prob$ of {$(X, S, Y) \in \bbR^d \times \class{S} \times \bbR$.}

\begin{restatable}{assumption}{density}
    \label{ass:density_rate}
    For each $s \in \class{S}$ the univariate measure $\nu_{f^*|s}$ admits a density $q_s$, which is lower bounded by $ \underline{\lambda}_s > 0$ and upper-bounded by $\overline{\lambda}_s \geq \underline{\lambda}_s$.
\end{restatable}
{Although the lower bound on the density assumption is rather strong and might potentially be violated in practice, it is still reasonable in certain situations. We believe that it can be replaced by the assumption that $f^*(X, S)$ conditionally on $S {=} s$ for all $s {\in} \class{S}$ admits $2 {+} \epsilon$ moments.
We do not explore this relaxation in our work as it significantly complicates the proof of Theorem~\ref{thm:rate}.
{At the same time, our empirical study suggests that the lower bound on the density is not intrinsic to the problem, since the estimator exhibits a good performance across various scenarios.}
In contrast, the milder assumption that the density is upper bounded is crucial for our proof and {seems to be necessary.}
}

Apart from the assumption on the density of $\nu_{f^*|s}$, the actual rate of estimation depends on the quality of the base estimator $\hat f$.
We require the following assumption, which states that $\hf$ approximates $f^*$ point-wise with rate $b_n^{-{1}/{2}}$ {and a standard sub-Gaussian concentration for $\hf$ can be derived}.
\begin{restatable}{assumption}{concentration}
    \label{ass:concentration_estimator}
    There exist positive constants $c$ and $C$ independent from $n$, $N$, $N_1, \ldots, N_{|\class{S}|}$, and a positive sequence $b_n : \bbN \to \bbR_+$ such that for all $\delta > 0$ it holds that
    \begin{align*}
        \Probf\left(|f^*(x, s) - \hat{f}(x, s)| \geq \delta\right) \leq c \exp\left(-Cb_{n}\delta^2\right) \text{ for almost all $(x, s)$ \wrt $\Prob_{X, S}$}\enspace.
    \end{align*}
\end{restatable}
We refer to~\citep{Audibert_Tsybakov07, Sara08, Lei14, Devroye78, lei2014distribution} for various examples of estimators and additional assumptions such that the bound in Assumption~\ref{ass:concentration_estimator} is satisfied.
It includes local polynomial estimators, k-nearest neighbours, and linear regression, to name just a few.

Under these assumptions we can prove the following finite-sample estimation bound.
\begin{restatable}[Estimation guarantee]{theorem}{riskbound}
    \label{thm:rate}
    Let Assumptions~\ref{ass:density_rate} and~\ref{ass:concentration_estimator} be satisfied, and set $\sigma \lesssim \min_{s \in \class{S}}N_s^{-1/2} {\wedge} b_{n}^{-1/2}$,
    then the estimator $\hat g$ defined in Eq.~\eqref{eq:proposed_estimator} satisfies
    \begin{align*}
        \Expf\abs{g^*(X, S) - \hat g(X, S)} \lesssim b_{n}^{-1/2} \bigvee \left(\sum_{s \in \class{S}} p_s N_{s}^{-1/2}\right) \bigvee \sqrt{ \frac{|\class{S}|}{N}}\enspace,
    \end{align*}
    where the leading constant depends only on $\underline{\lambda}_s, \overline{\lambda}_s, C, c$ from Assumptions~\ref{ass:density_rate} and~\ref{ass:concentration_estimator}.
\end{restatable}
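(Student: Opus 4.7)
The plan is to decompose the pointwise error $\hat{g}(X,S) - g^*(X,S)$ into a telescoping sum of three terms and control each piece by a different tool. Set $T(u) = \sum_{s' \in \class{S}} p_{s'} Q_{f^*|s'}(u)$ and $\hat{T}(u) = \sum_{s' \in \class{S}} \hp_{s'} \hat{Q}_{\hf|s'}(u)$, so that $g^*(x,s) = T(F_{f^*|s}(f^*(x,s)))$ and $\hat{g}(x,s) = \hat{T}(\hat{F}_{\hf|s}(\hf(x,s)+\varepsilon))$. Under Assumption~\ref{ass:density_rate}, the two-sided density bound forces each $\nu_{f^*|s'}$ to be supported on a bounded interval, which makes $Q_{f^*|s'}$ both bounded and $1/\underline{\lambda}_{s'}$-Lipschitz, makes $F_{f^*|s}$ $\overline{\lambda}_s$-Lipschitz, and gives $T$ the Lipschitz constant $L_T = \sum_{s'} p_{s'}/\underline{\lambda}_{s'}$ together with $\|T\|_\infty < \infty$.

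I then write $\hat{g} - g^* = \mathrm{(A)} + \mathrm{(B)} + \mathrm{(C)}$, with $\mathrm{(A)} = T(F_{f^*|s}(\hf+\varepsilon)) - T(F_{f^*|s}(f^*))$, $\mathrm{(B)} = T(\hat{F}_{\hf|s}(\hf+\varepsilon)) - T(F_{f^*|s}(\hf+\varepsilon))$, and $\mathrm{(C)} = \hat{T}(\hat{F}_{\hf|s}(\hf+\varepsilon)) - T(\hat{F}_{\hf|s}(\hf+\varepsilon))$. For (A), chaining the Lipschitz properties gives $|\mathrm{(A)}| \leq L_T\overline{\lambda}_s(|\hf-f^*|+\sigma)$; integrating the subgaussian tail from Assumption~\ref{ass:concentration_estimator} and using $\sigma \lesssim b_n^{-1/2}$ yields $\Expf|\mathrm{(A)}| \lesssim b_n^{-1/2}$. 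For (B), Lipschitz of $T$ reduces matters to bounding $\Expf|\hat{F}_{\hf|s}(y)-F_{f^*|s}(y)|$ at the random argument $y = \hf(X,s)+\varepsilon$. I split this via the triangle inequality into $|\hat{F}_{\hf|s}(y)-F_{\tilde\nu_{\hf|s}}(y)| + |F_{\tilde\nu_{\hf|s}}(y)-F_{f^*|s}(y)|$, where $\tilde\nu_{\hf|s}$ denotes the law of $\hf(X,s)+\varepsilon$ given $S=s$ and the labelled data. A Dvoretzky--Kiefer--Wolfowitz inequality conditional on $\hf$ bounds the empirical term at rate $N_s^{-1/2}$, while the second term is the Kolmogorov distance between $\tilde\nu_{\hf|s}$ and $\nu_{f^*|s}$, which is controlled by coupling $\hf+\varepsilon$ with $f^*$ and invoking Assumption~\ref{ass:concentration_estimator} together with the $\overline{\lambda}_s$-Lipschitz CDF to obtain $O(b_n^{-1/2})$.

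For (C), further split $\hat{T}(u)-T(u) = \sum_{s'}(\hp_{s'}-p_{s'})Q_{f^*|s'}(u) + \sum_{s'}\hp_{s'}(\hat{Q}_{\hf|s'}(u)-Q_{f^*|s'}(u))$. The frequency piece is at most $\|T\|_\infty \sum_{s'}|\hp_{s'}-p_{s'}|$, whose expectation is $\lesssim \sqrt{|\class{S}|/N}$ by standard multinomial concentration on $\class{A}$. For the quantile piece, the one-dimensional identity $\int_0^1 |Q_\mu - Q_\nu|(u)\,du = \class{W}_1(\mu,\nu)$ combined with a triangle inequality reduces the problem to bounding $\Expf\class{W}_1(\hat{\nu}^0_{\hf|s'}, \tilde{\nu}_{\hf|s'}) \lesssim N_{s'}^{-1/2}$, via the Bobkov--Ledoux bound~\cite{Bobkov_Ledoux16} on empirical Wasserstein-$1$ in one dimension (valid since the supports are bounded), and $\Expf\class{W}_1(\tilde{\nu}_{\hf|s'}, \nu_{f^*|s'}) \leq \Expf|\hf+\varepsilon-f^*| \lesssim b_n^{-1/2}$ via the obvious coupling.

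The main obstacle is converting these \emph{integrated} one-dimensional Wasserstein controls into a \emph{pointwise} bound at the data-dependent argument $u = \hat{F}_{\hf|s}(\hf+\varepsilon)$. The jittering step guarantees that $\hf(X,s)+\varepsilon$ has a conditional density, and combined with the upper density bound $\overline{\lambda}_s$ on $\nu_{f^*|s}$ and the subgaussian closeness of $\hf$ to $f^*$, the push-forward law of $u$ can be shown to be comparable to the uniform distribution on $[0,1]$ in a suitable sense. A peeling argument in the spirit of~\cite{Audibert_Tsybakov07}, together with classical rank-statistics identities from~\cite{van2000asymptotic}, then transfers the integrated Wasserstein bounds into the desired pointwise expectation. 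Marginalising over $S$ with weights $p_s = \Probf(S=s)$ and combining the three contributions delivers the claimed $b_n^{-1/2} \vee \sum_{s \in \class{S}} p_s N_s^{-1/2} \vee \sqrt{|\class{S}|/N}$ rate.
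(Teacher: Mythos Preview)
Your decomposition and toolkit (Lipschitz chaining, DKW, Bobkov--Ledoux, multinomial concentration) match the paper's, and pieces (A), (B), and the frequency part of (C) are handled essentially the same way. The one genuine soft spot is your final paragraph, where you pass from \emph{integrated} $\class{W}_1$ control on the quantile piece of (C) to a \emph{pointwise} bound at the random argument $u=\hat F_{\hf|s}(\hf(X,s)+\varepsilon)$. Your description there (``comparable to uniform in a suitable sense'', peeling) is vague, and peeling is not the right tool for this step.

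The paper resolves it cleanly and directly. It inserts the \emph{oracle} empirical quantile $\hat Q_{f^*|s'}$ as a waypoint and splits $\hat Q_{\hf|s'}(u)-Q_{f^*|s'}(u)$ into $\bigl(\hat Q_{\hf|s'}(u)-\hat Q_{f^*|s'}(u)\bigr)+\bigl(\hat Q_{f^*|s'}(u)-Q_{f^*|s'}(u)\bigr)$. For the second bracket it uses $\class{W}_\infty$ rather than $\class{W}_1$: the Bobkov--Ledoux bound $\Expf\class{W}_\infty(\nu_{f^*|s'},\hat\nu^0_{f^*|s'})\lesssim \underline\lambda_{s'}^{-1}N_{s'}^{-1/2}$ applies because $\nu_{f^*|s'}$ has a density \emph{lower} bound, and a sup-norm bound trivially dominates the pointwise value. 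For the first bracket, the rank-statistics fact is \emph{exact}, not approximate: conditionally on $\class{L}$, the variable $u$ is uniformly distributed on $\{0,1/|\class{I}_1^s|,\ldots,1\}$ (this is precisely what jittering buys). Since both empirical quantile functions are piecewise constant on that same grid, $\Expf|\hat Q_{\hf|s'}(u)-\hat Q_{f^*|s'}(u)|$ is literally a Riemann sum equal (up to a factor $2$) to $\class{W}_1(\hat\nu^0_{\hf|s'},\hat\nu^0_{f^*|s'})$, which is then bounded by the coupling $\Expf|\hf-f^*|\lesssim b_n^{-1/2}$. No peeling, and no density assumption on the perturbed law $\tilde\nu_{\hf|s'}$, is needed. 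Your outline becomes a complete proof once you replace the hand-wave in the last paragraph by this two-step split; note in particular that trying to run $\class{W}_\infty$ directly on $\hat\nu^0_{\hf|s'}$ versus $\tilde\nu_{\hf|s'}$ would require a density lower bound on $\tilde\nu_{\hf|s'}$, which Assumption~\ref{ass:density_rate} does not provide.
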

    The proof of this result combines expected deviation of empirical measure from the real measure in terms of Wasserstein distance on real line~\cite{Bobkov_Ledoux16} with the already mentioned rank statistics and classical peeling argument of~\cite{Audibert_Tsybakov07}.\\
   The first term of the derived bound corresponds to the estimation error of $f^*$ by $\hat f$, the second term is the price to pay for not knowing conditional distributions $X | S = s$ while the last term  correspond to the price of unknown marginal probabilities of each protected attribute.
   Notice that if $N_s = p_s N$, which corresponds to the standard \iid sampling from $\Prob_{X, S}$ of unlabeled data, the second and the third term are of the same order.
   {Moreover, if $N$ is sufficiently large, which in most scenarios\footnote{One can achieve it by splitting the labeled dataset $\class{L}$ artificially augmenting the unlabeled one, which ensures that $N > n$. In this case if $b_n^{-1/2} = O(n^{-1/2})$, then the first term is always dominant in the derived bound.} is \emph{w.l.o.g.}, then the rate is dominated by $b_n^{-1/2}$.
   {Notice that one can find a collection of joint distributions $\Prob$, such that $f^*$ satisfies demographic parity. Hence, if $b_n^{-1/2}$ is the minimax optimal estimation rate of $f^*$, then it is also optimal for $g^* \equiv f^*$.}

  }





\section{Empirical study}
\label{sec:exp}
In this section, we present numerical experiments\footnote{The source of our method can be found at \url{https://www.link-anonymous.link}.} with the proposed fair regression estimator defined in Section~\ref{sec:estimator_and_proofs}.
In all experiments, we collect statistics on the test set $\class{T} = \{(X_i, S_i, Y_i)\}_{i = 1}^{n_{\text{test}}}$.
The empirical mean squared error (${\text{MSE}}$) is defined as
\begin{align*}
{\text{MSE}}\,(g) = \frac{1}{n_{\text{test}}}\sum_{(X, S, Y) \in \class{T}} (Y - g(X, S))^2\enspace.
\end{align*}
We also measure the violation of fairness constraint imposed by
Definition~\ref{def:demographic_parity} {via} the empirical Kolmogorov-Smirnov (KS) distance,
\begin{align*}
{\text{KS}}\,(g) = \max_{s, s' \in \class{S}}\sup_{t \in \bbR}\bigg|\frac{1}{|{\class{T}}^{s}|}\sum_{(X, S, Y) \in \class{T}^s}\ind{g(X, S) \leq t} - \frac{1}{|\class{T}^{s'}|}\sum_{(X, S, Y) \in \class{T}^{s'}}\ind{g(X, S) \leq t}\bigg| \enspace,
\end{align*}
where for all $s{\in}\class{S}$ we define the set $\class{T}^s {=} \enscond{(X, S, Y) \in \class{T}}{S {=} s}$.
For all datasets we split the data in two parts (70\% train and 30\% test), this procedure is repeated 30 times, and we report the average performance on the test set alongside its standard deviation.
We employ the 2-steps 10-fold CV procedure considered by~\cite{donini2018empirical} to select the best hyperparameters with the training set.
In the first step, we shortlist all the hyperparameters with ${\text{MSE}}$ close to the best one (in our case, the hyperparameters which lead to 10\% larger ${\text{MSE}}$ \wrt the best ${\text{MSE}}$).~Then, from this list, we select the hyperparameters with the lowest ${\text{KS}}$.

%
{\bf Methods.}
We compare our method (see Section~\ref{sec:estimator_and_proofs}) to different fair regression approaches for both linear and non-linear regression. In the case of linear models we consider the following methods: Linear RLS plus~\cite{berk2017convex} (RLS+Berk), Linear RLS plus~\cite{oneto2019general} (RLS+Oneto),
and Linear RLS plus Our Method (RLS+Ours), where RLS is the abbreviation of Regularized Least Squares. \\
In the case of non-linear models we compare to the following methods. {i) For} Kernel RLS (KRLS): KRLS plus~\cite{oneto2019general} (KRLS+Oneto), KRLS plus~\cite{perez2017fair} (KRLS+Perez),
KRLS plus Our Method (KRLS+Ours); {ii) For} Random Forests (RF): RF plus~\cite{raff2017fair} (RF+Raff), RF plus~\cite{agarwal2019fair}\footnote{We thank the authors for sharing a prototype of their code.} (RF+Agar),
and RF plus Our Method (RF+Ours). \\
The hyperparameters of the methods are set as follows.
For RLS we set the regularization hyperparameters $\lambda \in 10^{\{-4.5,-3.5,\cdots,3\}}$ and for KRLS we set  $\lambda \in 10^{\{-4.5,-3.5,\cdots,3\}}$ and $\gamma \in 10^{\{-4.5,-3.5,\cdots,3\}}$.
Finally, for RF we set to $1000$ the number of trees and for the number of features to select during the tree creation we search in $\{ d^{\sfrac{1}{4}}, d^{\sfrac{1}{2}}, d^{\sfrac{3}{4}} \}$.

%
{\bf Datasets.}
In order to analyze the performance of our methods and test it against the state-of-the-art alternatives, we consider five benchmark datasets, CRIME, LAW, NLSY, STUD, and UNIV, which are briefly described below:\\
{\em Communities\&Crime (CRIME)} contains socio-economic, law enforcement, and crime data about communities in the US~\cite{redmond2002data} with $1994$ examples.
The task is to predict the number of violent crimes per $10^5$
population (normalized to $[0, 1]$) with race as the protected attribute.
Following~\cite{calders2013controlling}, we made a binary sensitive attribute $s$ as to the percentage of black population, which yielded $970$ instances of $s{=}1$ with a mean crime
rate $0.35$ and $1024$ instances of $s{=}{-}1$ with a mean crime rate $0.13$.\\
{\em Law School (LAW)} refers to the Law School Admissions Councils National
Longitudinal Bar Passage Study~\cite{wightman1998lsac} and has $20649$ examples.
The task is to predict a students GPA (normalized to $[0, 1]$) with race as the protected attribute (white versus non-white).\\
{\em National Longitudinal Survey of Youth (NLSY)} involves survey results by the U.S.~Bureau of Labor Statistics that is intended to gather information on the labor market activities and other life events of several groups~\cite{NLSY:2019}.
Analogously to~\cite{komiyama2018comparing} we model a virtual company’s hiring decision assuming that the company does not have access to the applicants’ academic scores.
We set as target the person’s GPA (normalized to $[0, 1]$), with race as sensitive attribute\\
{\em Student Performance (STUD)}, approaches $649$ students achievement (final grade) in secondary education of two Portuguese schools using $33$ attributes~\cite{cortez2008using}, with gender as the protected attribute.\\
{\em University \textit{Anonymous} (UNIV)} is a proprietary and highly sensitive dataset containing all the data about the past and present students enrolled at the University of \textit{Anonymous}.
In this study we take into consideration students who enrolled, in the academic year 2017-2018.
The dataset contains 5000 instances, each one described by 35 attributes (both numeric and categorical) about ethnicity, gender, financial status, and previous school experience.
The scope is to predict the average grades at the end of the first semester, with gender as the protected attribute.

{\bf Comparison w.r.t. state-of-the-art.}
\label{sec:exp:comparison}
%
\begin{table*}[t!]
\scriptsize
\centering
\setlength{\tabcolsep}{0.04cm}
\renewcommand{\arraystretch}{1.4}
\centering
\setlength{\tabcolsep}{0.02cm}
\begin{tabular}{|l||c|c||c|c||c|c||c|c||c|c||}
\hline
\hline
& \multicolumn{2}{c||}{CRIME}
& \multicolumn{2}{c||}{LAW}
& \multicolumn{2}{c||}{NLSY}
& \multicolumn{2}{c||}{STUD}
& \multicolumn{2}{c||}{UNIV} \\
Method
& ${\text{MSE}}$ & ${\text{KS}}$
& ${\text{MSE}}$ & ${\text{KS}}$
& ${\text{MSE}}$ & ${\text{KS}}$
& ${\text{MSE}}$ & ${\text{KS}}$
& ${\text{MSE}}$ & ${\text{KS}}$ \\
\hline
\hline
RLS
& $ .033 {\pm} .003 $ & $ .55 {\pm} .06 $
& $ .107 {\pm} .010 $ & $ .15 {\pm} .02 $
& $ .153 {\pm} .016 $ & $ .73 {\pm} .07 $
& $ 4.77 {\pm} .49 $ & $ .50 {\pm} .05 $
& $ 2.24 {\pm} .22 $ & $ .14 {\pm} .01 $ \\
RLS+Berk
& $ .037 {\pm} .004 $ & $ .16 {\pm} .02 $
& $ .121 {\pm} .013 $ & $ .10 {\pm} .01 $
& $ .189 {\pm} .019 $ & $ .49 {\pm} .05 $
& $ 5.28 {\pm} .57 $ & $ .32 {\pm} .03 $
& $ 2.43 {\pm} .23 $ & $ .05 {\pm} .01 $ \\
RLS+Oneto
& $ .037 {\pm} .004 $ & $ .14 {\pm} .01 $
& $ .112 {\pm} .012 $ & $ .07 {\pm} .01 $
& $ .156 {\pm} .016 $ & $ .50 {\pm} .05 $
& $ 5.02 {\pm} .54 $ & $ .23 {\pm} .02 $
& $ 2.44 {\pm} .26 $ & $ .05 {\pm} .01 $ \\
RLS+Ours
& $ .041 {\pm} .004 $ & $ .12 {\pm} .01 $
& $ .141 {\pm} .014 $ & $ .02 {\pm} .01 $
& $ .203 {\pm} .019 $ & $ .09 {\pm} .01 $
& $ 5.62 {\pm} .52 $ & $ .04 {\pm} .01 $
& $ 2.98 {\pm} .32 $ & $ .02 {\pm} .01 $ \\
\hline
KRLS
& $ .024 {\pm} .003 $ & $ .52 {\pm} .05 $
& $ .040 {\pm} .004 $ & $ .09 {\pm} .01 $
& $ .061 {\pm} .006 $ & $ .58 {\pm} .06 $
& $ 3.85 {\pm} .36 $ & $ .47 {\pm} .05 $
& $ 1.43 {\pm} .15 $ & $ .10 {\pm} .01 $ \\
KRLS+Oneto
& $ .028 {\pm} .003 $ & $ .19 {\pm} .02 $
& $ .046 {\pm} .004 $ & $ .05 {\pm} .01 $
& $ .066 {\pm} .007 $ & $ .06 {\pm} .01 $
& $ 4.07 {\pm} .39 $ & $ .18 {\pm} .02 $
& $ 1.46 {\pm} .13 $ & $ .04 {\pm} .01 $ \\
KRLS+Perez
& $ .033 {\pm} .003 $ & $ .25 {\pm} .02 $
& $ .048 {\pm} .005 $ & $ .04 {\pm} .01 $
& $ .065 {\pm} .007 $ & $ .08 {\pm} .01 $
& $ 3.97 {\pm} .38 $ & $ .14 {\pm} .02 $
& $ 1.50 {\pm} .15 $ & $ .06 {\pm} .01 $ \\
KRLS+Ours
& $ .034 {\pm} .004 $ & $ .09 {\pm} .01 $
& $ .056 {\pm} .005 $ & $ .01 {\pm} .01 $
& $ .081 {\pm} .008 $ & $ .03 {\pm} .01 $
& $ 4.46 {\pm} .43 $ & $ .03 {\pm} .01 $
& $ 1.71 {\pm} .16 $ & $ .02 {\pm} .01 $ \\
\hline
RF
& $ .020 {\pm} .002 $ & $ .45 {\pm} .04 $
& $ .046 {\pm} .005 $ & $ .11 {\pm} .01 $
& $ .055 {\pm} .006 $ & $ .55 {\pm} .06 $
& $ 3.59 {\pm} .39 $ & $ .45 {\pm} .05 $
& $ 1.31 {\pm} .13 $ & $ .10 {\pm} .01 $ \\
RF+Raff
& $ .030 {\pm} .003 $ & $ .21 {\pm} .02 $
& $ .058 {\pm} .006 $ & $ .06 {\pm} .01 $
& $ .066 {\pm} .006 $ & $ .08 {\pm} .01 $
& $ 4.28 {\pm} .40 $ & $ .09 {\pm} .01 $
& $ 1.38 {\pm} .12 $ & $ .02 {\pm} .01 $ \\
RF+Agar
& $ .029 {\pm} .003 $ & $ .13 {\pm} .01 $
& $ .050 {\pm} .005 $ & $ .04 {\pm} .01 $
& $ .065 {\pm} .006 $ & $ .07 {\pm} .01 $
& $ 3.87 {\pm} .41 $ & $ .07 {\pm} .01 $
& $ 1.40 {\pm} .13 $ & $ .02 {\pm} .01 $ \\
RF+Ours
& $ .033 {\pm} .003 $ & $ .08 {\pm} .01 $
& $ .064 {\pm} .006 $ & $ .02 {\pm} .01 $
& $ .070 {\pm} .007 $ & $ .03 {\pm} .01 $
& $ 4.18 {\pm} .38 $ & $ .02 {\pm} .01 $
& $ 1.49 {\pm} .14 $ & $ .01 {\pm} .01 $ \\
\hline
\hline
\end{tabular}
\caption{Results for all the datasets and all the methods concerning ${\text{MSE}}$ and ${\text{KS}}$.
}
\label{tab:results}
\end{table*}
In Table~\ref{tab:results}, we present the performance of different methods on various datasets described above.
%
{
One can notice that LAW and UNIV datasets have a least amount of disciminatory bias (quantified by KS), since the fairness \emph{unaware} methods perform reasonably well in terms of ${\text{KS}}$. Furthermore, on these two datasets, the difference in performance between all fairness aware methods is less noticeable.
In contrast, on CRIME, NLSY, and STUD, fairness unaware methods perform poorly in terms of ${\text{KS}}$.
More importantly, our findings indicate that the proposed method is
competitive with state-of-the-art methods
and is the most effective in imposing the fairness constraint.
In particular, in all except two considered scenarios (CRIME{+}RLS, CRIME{+}RF) our method improves fairness by $50\%$ (and up to $80\%$ in some cases) over the closest fairness aware method. In contrast, the accuracy of our method decreases by  $1\%$ up to $30\%$ when compared to the most accurate fairness aware method.
However, let us emphasize that the relative decrease in accuracy is much smaller than the relative improvement in fairness across the considered scenarios.
For example, on NLSY{+}RLS the most accurate fairness aware method is RLS+Oneto with mean ${\text{MSE}} {=} .156$ and mean ${\text{KS}} {=} .50$, while RLS{+}Ours yields mean ${\text{MSE}} {=} .203$ and mean ${\text{KS}} {=} .09$.
That is, compared to RLS{+}Oneto our method drops about $30\%$ in accuracy, while gains about $ 82 \%$ in fairness.
With RF, which is a more powerful estimator, the average drop in accuracy across all datasets compared to RF{+}Agar is about $12\%$ while the average improvement
in fairness
is about $53\%$.
}

\section{Conclusion and perspectives}
\label{sec:concl}
In this work we investigated the problem of fair regression with Demographic Parity constraint assuming that the sensitive attribute is available for prediction.
We derived a closed form solution for the optimal fair predictor which offers a simple and intuitive interpretation. Relying on this expression, we devised a post-processing procedure, which transforms any base estimator of the regression function into a nearly fair one, independently of the underlying distribution.
Moreover, if the base estimator is accurate, our post-processing method yields an accurate estimator of the optimal fair predictor as well.
Finally, we conducted an empirical study indicating the effectiveness of our method in imposing fairness in practice. In the future it would be valuable to extend our methodology to the case when we are not allowed to use the sensitive feature as well as to other notions of fairness.

\section{Acknowledgement}
This work was supported by the Amazon AWS Machine Learning Research Award, SAP SE, and CISCO.

\bibliography{biblio}
\bibliographystyle{plainnat}

\newpage

\appendix
{\Large\bf Appendix}

The appendix is organized as follows. In Appendix~\ref{app:oracle} we provide the proof of Theorem~\ref{thm:Oracle}, in Appendix~\ref{app:fairness} we provide the proof of Proposition~\ref{prop:distribution_free_fairness}, and in Appendix~\ref{app:risk} we prove Theorem~\ref{thm:rate}.
For reader's convenience all the results are repeated in this supplementary material and a short overview of classical results is provided.

\section{Characterization of the optimal}
\label{app:oracle}

Before providing the proof of Theorem~\ref{thm:Oracle}, let us give a brief overview of classical results in the Optimal transport theory with one dimensional measures; all the results can be found in~\cite{villani2003topics, santambrogio2015optimal}

\wasser*
The coupling $\gamma$ which achieves the infimum in the definition of the Wasserstein-2 distance is called the optimal coupling.

Also let us mention that the Wasserstein-2 distance between two univariate probability measures $\nu, \mu$, defined in Definition~\ref{def:wass}, can be expressed as
 \begin{align*}
     \class{W}_2^2(\mu,\nu) = \inf_{\gamma}\Exp_{(Z_{\mu}, Z_{\nu}) \sim \gamma}(Z_{\nu} - Z_{\mu})^2\enspace,
 \end{align*}
 where $Z_{\nu} \sim \nu$ and $Z_{\mu} \sim \mu$ and the infimum is taken over all joint distributions $\gamma$ of $(Z_{\nu}, Z_{\mu})$ which preserve marginals.

The next result establishes that as long as one of the measures in the definition of the Wasserstein-2 distance admits a density, then the optimal coupling in the infimum in Definition~\ref{def:wass} is deterministic (see \eg~\cite[Theorem 2.18]{villani2003topics} or~\cite[Theorems 2.5 and  2.9]{santambrogio2015optimal}).
\begin{theorem}
    \label{thm:Brenier}
    Let $\nu, \mu$ be two univariate measures such that $\nu$ has a density and let $X \sim \nu$. Then there exists a mapping $T: \bbR \to \bbR$ such
    \begin{align*}
        \class{W}_2^2(\mu,\nu) = \Exp(X - T(X))^2\enspace,
    \end{align*}
    that is $(X, T(X)) \sim \bar{\gamma} \in \Gamma_{\mu, \nu}$ where $\bar{\gamma}$ is an optimal coupling.
    Moreover, the transport map is given by $T = Q_{\mu} \circ F_{\nu}$.
\end{theorem}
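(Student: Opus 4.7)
The plan is to split the proof into two pieces: first verify that the candidate map $T = Q_\mu \circ F_\nu$ produces a valid coupling in $\Gamma_{\mu,\nu}$, and then show its optimality by reducing the Wasserstein-2 problem to a covariance maximization that is controlled by the Fr\'echet--Hoeffding inequality. For the feasibility step, since $\nu$ has a density, $F_\nu$ is continuous and the probability integral transform gives $U \eqdef F_\nu(X) \sim U([0,1])$; standard inverse-transform sampling then yields $\Prob(Q_\mu(U) \leq t) = \Prob(U \leq F_\mu(t)) = F_\mu(t)$, so $T(X) \sim \mu$. Hence the law $\bar\gamma$ of $(X, T(X))$ lies in $\Gamma_{\mu,\nu}$, and $\Exp(X - T(X))^2 \geq \class{W}_2^2(\mu,\nu)$ trivially.

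For optimality, I would expand $\Exp(Z_\nu - Z_\mu)^2 = \Exp Z_\nu^2 + \Exp Z_\mu^2 - 2\Exp[Z_\nu Z_\mu]$ for any $(Z_\nu, Z_\mu) \sim \gamma \in \Gamma_{\mu,\nu}$. The first two terms are fixed by the marginals, so minimizing $\class{W}_2^2$ is equivalent to maximizing $\Exp[Z_\nu Z_\mu]$. I would then apply Hoeffding's covariance identity,
\begin{align*}
\Exp[Z_\nu Z_\mu] - \Exp Z_\nu \cdot \Exp Z_\mu = \int\!\!\int \bigl[\Prob(Z_\nu \leq s,\, Z_\mu \leq t) - F_\nu(s) F_\mu(t)\bigr]\, ds\, dt,
\end{align*}
together with the Fr\'echet--Hoeffding bound $\Prob(Z_\nu \leq s, Z_\mu \leq t) \leq \min(F_\nu(s), F_\mu(t))$, to obtain the uniform upper bound $\Exp[Z_\nu Z_\mu] \leq \int_0^1 Q_\nu(u) Q_\mu(u)\, du$. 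To see that $\bar\gamma$ attains this bound, note that because $\nu$ has a density, $X = Q_\nu(F_\nu(X))$ almost surely, so $(X, T(X)) \stackrel{d}{=} (Q_\nu(U), Q_\mu(U))$ with $U \sim U([0,1])$. Therefore $\Exp[X \cdot T(X)] = \int_0^1 Q_\nu(u) Q_\mu(u)\, du$ matches the Fr\'echet--Hoeffding ceiling, and $\bar\gamma$ is optimal.

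The main obstacle is step 2: proving the Fr\'echet--Hoeffding inequality at the required level of generality while handling any integrability subtleties (one may truncate and pass to the limit, or restrict first to the case where $\int x^2 d\nu, \int y^2 d\mu < \infty$, which is implicit whenever $\class{W}_2^2(\mu,\nu) < \infty$). A cleaner alternative I would keep in reserve is the cyclical-monotonicity characterization of optimal couplings for the quadratic cost: in one dimension this reduces to the requirement that the support of the optimal plan be a monotone (non-decreasing) set, and $\bar\gamma$ is supported on the graph of $T = Q_\mu \circ F_\nu$, which is non-decreasing since both factors are. Either route delivers the theorem; the covariance route has the advantage of making the equality case transparent and of yielding, as a by-product, the familiar quantile representation $\class{W}_2^2(\mu,\nu) = \int_0^1 (Q_\nu(u) - Q_\mu(u))^2 du$ that will be convenient when invoking Theorem~\ref{thm:Brenier} in the proof of Theorem~\ref{thm:Oracle}.
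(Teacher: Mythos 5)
The paper does not actually prove Theorem~\ref{thm:Brenier}: it is imported as a classical fact, with pointers to \cite[Theorem 2.18]{villani2003topics} and \cite[Theorems 2.5 and 2.9]{santambrogio2015optimal}. Your argument is a correct, self-contained proof of the one-dimensional case, and it is essentially the standard quantile-coupling argument from those references: feasibility of $T = Q_{\mu}\circ F_{\nu}$ via the probability integral transform (valid because the density assumption makes $F_{\nu}$ continuous) plus inverse-transform sampling, and optimality by reducing to maximization of the cross term $\Expf[Z_{\nu}Z_{\mu}]$, bounded via Hoeffding's covariance identity and the Fr\'echet--Hoeffding upper bound, with equality for the comonotone coupling $(Q_{\nu}(U), Q_{\mu}(U))$, which coincides in law with $(X, T(X))$ since $Q_{\nu}(F_{\nu}(X)) = X$ almost surely. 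The integrability caveat you flag is the right one and is harmless here, since the statement is only nontrivial when both measures have finite second moments; your fallback via cyclical monotonicity of the graph of the nondecreasing map $T$ is also a valid route. Two cosmetic points: the covariance route, unlike the monotonicity route, has the advantage of delivering the quantile representation $\class{W}_2^2(\mu,\nu) = \int_0^1 (Q_{\nu}(u)-Q_{\mu}(u))^2\,du$, which is exactly what Lemma~\ref{lem:barycenter_expression} and the deviation bounds of \cite{Bobkov_Ledoux16} used later in the paper rest on; and note that the paper's $\Gamma_{\mu,\nu}$ puts $\mu$ on the first coordinate while the theorem writes $(X,T(X))$ with $X\sim\nu$ first --- a notational looseness in the source that your proof silently (and correctly) ignores by symmetry of the cost.
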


By the abuse of notation, for an increasing real-valued univariate function $F$ we will use $F^{\leftarrow}$ to denote its generalized inverse.
For instance, if $F: \bbR \to [0, 1]$ is a CDF, then $F^{\leftarrow}$ is the quantile function that was defined in the introduction.

The next result is standard and can be found for instance in~\cite[Section 6.1]{agueh2011barycenters} or~\cite[Section 5.5.5]{santambrogio2015optimal}.
It states that for one dimensional Wasserstein barycenter problem, the optimal measure admits a closed form solution.
\begin{lemma}
    \label{lem:barycenter_expression}
    Let $\nu_1, \ldots, \nu_{|\class{S}|}$ be $|\class{S}|$ univariate probability measures admitting densities, for all $p_1, \ldots, p_{|\class{S}|} \geq 0$ such that $p_1 + \ldots + p_{|\class{S}|} = 1$ define
    \begin{align*}
        \nu^* \in \argmin_{\nu}\sum_{s = 1}^{|\class{S}|} p_s \class{W}_2^2(\nu_s, \nu)\enspace.
    \end{align*}
    Then, the cumulative distribution of $\nu^*$ is given by
    \begin{align*}
        F_{\nu^*}(\cdot) = \left(\sum_{s = 1}^{|\class{S}|} p_s Q_{\nu_s}\right)^{\leftarrow}(\cdot)\enspace.
    \end{align*}
\end{lemma}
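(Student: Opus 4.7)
\textbf{Plan for proving Lemma~\ref{lem:barycenter_expression}.}

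The plan is to exploit the fact that in one dimension the Wasserstein-2 distance has an explicit representation in terms of quantile functions, reducing the barycenter problem to a pointwise quadratic minimization on $[0,1]$. Concretely, for any two univariate probability measures $\mu,\nu$, Theorem~\ref{thm:Brenier} (applied, say, to the uniform measure on $[0,1]$, pushed forward by $Q_\mu$ and $Q_\nu$) together with the fact that $(Q_{\mu}(U),Q_{\nu}(U))$ with $U\sim U([0,1])$ is an optimal coupling yields the identity
\begin{align*}
\class{W}_2^2(\mu,\nu)=\int_0^1\bigl(Q_\mu(t)-Q_\nu(t)\bigr)^2\,dt.
\end{align*}
I would begin by stating this identity, using that each $\nu_s$ has a density so that the monotone coupling is indeed optimal.

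Second, I would apply this identity to each of the $|\class{S}|$ terms in the objective and interchange the finite sum with the integral, rewriting
\begin{align*}
\sum_{s=1}^{|\class{S}|} p_s\,\class{W}_2^2(\nu_s,\nu)=\int_0^1 \Bigl[\sum_{s=1}^{|\class{S}|} p_s\bigl(Q_{\nu_s}(t)-Q_\nu(t)\bigr)^2\Bigr]\,dt.
\end{align*}
For every fixed $t\in(0,1)$, the integrand is a weighted quadratic in $Q_\nu(t)$, uniquely minimized at the weighted mean $\bar Q(t):=\sum_{s}p_s Q_{\nu_s}(t)$. Hence any $\nu$ is a minimizer if and only if $Q_\nu(t)=\bar Q(t)$ for (Lebesgue)-a.e.\ $t\in(0,1)$.

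Third, I would verify that $\bar Q$ is itself the quantile function of a well-defined probability measure $\nu^*$. The map $\bar Q$ is a convex combination of non-decreasing left-continuous functions (each $Q_{\nu_s}$ is left-continuous as an infimum), hence non-decreasing and left-continuous; setting $\nu^*$ to be the pushforward of $U([0,1])$ under $\bar Q$ produces a probability measure whose quantile function is exactly $\bar Q$. This $\nu^*$ attains the pointwise infimum above, so it solves the barycenter problem, and any other solution must share the same quantile function, hence coincide with $\nu^*$.

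Finally, I would conclude by invoking the classical identity $F_{\nu^*}=Q_{\nu^*}^{\leftarrow}$ for probability measures on $\bbR$ (the CDF and quantile function are generalized inverses of one another), giving
\begin{align*}
F_{\nu^*}(\cdot)=\Bigl(\sum_{s=1}^{|\class{S}|}p_s Q_{\nu_s}\Bigr)^{\!\leftarrow}(\cdot).
\end{align*}
The main technical obstacle is the verification in the third step: one must be a bit careful that the pointwise minimizer $\bar Q$ is genuinely a quantile function (left-continuity, non-decreasing, and appropriate behavior at the endpoints of $(0,1)$) and that the pushforward construction really yields a probability measure with this quantile function; the densities assumption on the $\nu_s$'s makes each $Q_{\nu_s}$ well-behaved and ensures the monotone coupling is optimal, which is what makes the whole reduction go through cleanly.
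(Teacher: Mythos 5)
Your proposal is correct. The paper does not actually prove Lemma~\ref{lem:barycenter_expression}; it states it as a classical fact and points to \cite[Section 6.1]{agueh2011barycenters} and \cite[Section 5.5.5]{santambrogio2015optimal}, and the argument you give --- the one-dimensional quantile representation $\class{W}_2^2(\mu,\nu)=\int_0^1(Q_\mu(t)-Q_\nu(t))^2\,dt$, followed by pointwise quadratic minimization and the observation that the pointwise minimizer $\bar Q=\sum_s p_s Q_{\nu_s}$ is itself a legitimate (non-decreasing, left-continuous) quantile function, so the lower bound is attained --- is precisely the standard proof from those references. You also correctly flag the only step that needs care, namely that the unconstrained pointwise minimizer must be checked to lie in the feasible set of quantile functions; with that verified, the inversion $F_{\nu^*}=\bar Q^{\leftarrow}$ closes the argument.
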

Theorem~\ref{thm:Brenier} and Lemma~\ref{lem:barycenter_expression} are the two main ingredients that are used in the proof of Theorem~\ref{thm:Oracle}.
\oracle*
\begin{proof}[Proof of Theorem~\ref{thm:Oracle}]
    We want to show that
    \begin{align*}
        \min_{g \text{ is fair}}\Exp(f^*(X, S) - g(X, S))^2 = \min_{\nu} \sum_{s \in \class{S}}p_s\class{W}_2^2(\nu_{f^*|s}, \nu)\enspace.
    \end{align*}
    Let $\bar{g}: \bbR^d \times \class{S} \to \bbR$ be a minimizer of the \emph{l.h.s.} of the above equation
    and define by $\nu_{\bar g}$ the distribution of $\bar g$.
    Since $\nu_{f^* | s}$ admits density, using Theorem~\ref{thm:Brenier} for each $s \in \class{S}$ there exists $T_s = Q_{\nu_{\bar g}} \circ F_{f^* | s}$ such that
    \begin{align*}
        \sum_{s \in \class{S}}p_s\class{W}_2^2(\nu_{f^*|s}, \nu_{\bar g})
        &=
        \sum_{s \in \class{S}}p_s\int_{\bbR}\left(z - T_s (z)\right)^2 d\nu_{f^*|s}(z)\\
        &=
        \sum_{s \in \class{S}}p_s\int_{\bbR^d}\left(f^*(x, s) - T_s \circ f^* (x, s)\right)^2 d\Prob_{X | S = s}(x)\\
        &=
        \sum_{s \in \class{S}}p_s\Exp\left[\left(f^*(X, s) - \left(T_s \circ f^*\right) (X, s)\right)^2 | S = s\right]\\
        &=
        \Exp(f^*(X, S) - \tilde g(X, S))^2\enspace,
    \end{align*}
    where we defined $\tilde g$ for all $(x, s) \in \bbR^d \times \class{S}$ as
    \begin{align*}
        \tilde g(x, s) = \left(T_s \circ f^* \right)(x, s)
        =\left(Q_{\nu_{\bar g}} \circ F_{f^* | s} \circ f^*\right)(x, s)\enspace.
    \end{align*}
   The cumulative distribution of $\tilde g$ can be expressed as
    \begin{align*}
        \Prob(\tilde g (X, S) \leq t)
        &= \sum_{s \in \class{S}}p_s\Prob_{X | S = s}\left(Q_{\nu_{\bar g}} \circ F_{f^* | s} \circ f^*(X, s) \leq t\right)\\
        &= \sum_{s \in \class{S}}p_s\Prob_{X | S = s}\left(f^*(X, s) \leq Q_{f^* | s} \circ F_{\nu_{\bar g}} (t)\right) = F_{\nu_{\bar g}} (t)\enspace,
    \end{align*}
    where the last equality is due to the fact that $\nu_{f^*|s}$ admits a density for all $s\in \class{S}$.
    The above implies that $\tilde g$ is fair, thus on the one hand by optimality of $\bar g$ we have
    \begin{align*}
        \Exp(f^*(X, S) - \tilde g(X, S))^2 \geq \Exp(f^*(X, S) - \bar g(X, S))^2\enspace,
    \end{align*}
    on the other hand we have for each $s \in \class{S}$
    \begin{align*}
        \class{W}_2^2(\nu_{f^*|s}, \nu_{\bar g})
        \leq \Exp\left[\left(f^*(X, S) - \bar g (X, S)\right)^2 | S = s\right]\enspace.
    \end{align*}
    Thus we showed that
    \begin{align}
        \sum_{s \in \class{S}}p_s\class{W}_2^2(\nu_{f^*|s}, \nu_{\bar g}) = \min_{g \text{ is fair}}\Exp(f^*(X, S) - g(X, S))^2\label{eq:wass00}\enspace.
    \end{align}


    This implies that
    \begin{align}
        \min_{\nu} \sum_{s \in \class{S}}p_s\class{W}_2^2(\nu_{f^*|s}, \nu) \leq
        \min_{g \text{ is fair}}\Exp(f^*(X, S) - g(X, S))^2\label{eq:wass0}\enspace.
    \end{align}

    Now we want to show that the opposite inequality also holds.
    To this end define $\nu^*$ as
    \begin{align*}
        \nu^* \in \argmin_{\nu} \sum_{s \in \class{S}}p_s\class{W}_2^2(\nu_{f^*|s}, \nu)\enspace.
    \end{align*}
    Set $T^*_s$ as optimal transport maps from $\nu_{f^*|s}$ to $\nu^*$  of the form $T^*_s = Q_{\nu^*} \circ F_{f^* | s}$ (provided by Theorem~\ref{thm:Brenier} and our assumption on the density of $\nu_{f^* | s}$) and define $g^*$ for all $(x, s) \in \bbR^d \times \class{S}$ as
    \begin{align}
        g^*(x, s) = \left(Q_{\nu^*} \circ F_{f^* | s} \circ f^* \right)(x, s)\label{eq:wass3}\enspace.
    \end{align}
    By the definition of $g^*$ in Eq.~\eqref{eq:wass3} and Theorem~\ref{thm:Brenier} we clearly have
    \begin{align}
        \label{eq:wass1}
        \min_{\nu} \sum_{s \in \class{S}}p_s\class{W}_2^2(\nu_{f^*|s}, \nu) = \Exp(f^*(X, S) - g^*(X, S))^2\enspace.
    \end{align}
    Moreover since $\nu^*$ is independent from $S$, using similar argument as above one can show that $g^*$ satisfies the Demographic Parity constraint in Definition~\ref{def:demographic_parity} and thus, Eq.~\eqref{eq:wass1} yields
    \begin{align}
        \min_{\nu} \sum_{s \in \class{S}}p_s\class{W}_2^2(\nu_{f^*|s}, \nu)
        \geq
        \min_{g \text{ is fair}}\Exp(f^*(X, S) - g(X, S))^2 \enspace.
        \label{eq:wass2}
    \end{align}
    Eqs.~\eqref{eq:wass0} and~\eqref{eq:wass2} yield the first assertion of the result.
    Notice that thanks to Eq.~\eqref{eq:wass1} we have also shown that
    \begin{align*}
        \Exp(f^*(X, S) - g^*(X, S))^2 = \Exp(f^*(X, S) - \bar g(X, S))^2\enspace,
    \end{align*}
    and since $g^*$ is fair we can put $\bar g = g^*$.
    Finally, using Lemma~\ref{lem:barycenter_expression} we derive an explicit form of $\nu^*$ and conclude using Eq.~\eqref{eq:wass3}.
\end{proof}

\section{Proof of Proposition~\ref{prop:distribution_free_fairness}}
\label{app:fairness}
Let us first recall the well-known Dvoretzky–Kiefer–Wolfowitz inequality~\cite[Corollary 1]{massart1990tight}.
\begin{theorem}[Dvoretzky–Kiefer–Wolfowitz inequality]
    \label{thm:DKW}
    Let $Z_1, \ldots, Z_n$ be \iid real valued random variables with cumulative distribution $F$.
    Let $\hat F$ be the empirical cumulative distribution of $Z_1, \ldots, Z_n$, then
    \begin{align*}
        \Expf\normin{F - \hat{F}}_{\infty} \eqdef \Expf\sup_{t \in \bbR}\absin{F(t) - \hat{F}(t)} \leq \sqrt{\frac{\pi}{2n}}\enspace.
    \end{align*}
\end{theorem}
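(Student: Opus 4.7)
The plan is to derive this expectation bound as a short consequence of the tail form of the DKW inequality established by Massart, cited here as \cite[Corollary 1]{massart1990tight}, which gives the sharp estimate
\begin{equation*}
\Probf\left(\sup_{t \in \bbR}\absin{F(t) - \hat{F}(t)} > u\right) \leq 2 e^{-2nu^2}, \qquad \text{for every } u > 0.
\end{equation*}
All of the substantive probabilistic content is packaged into that tail bound, so the job of the present theorem is simply to integrate it out.

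First I would set $Y \eqdef \sup_{t \in \bbR}\absin{F(t) - \hat{F}(t)}$ and note that $Y$ is a well-defined nonnegative random variable (since $F$ and $\hat F$ are both right-continuous with left limits, the supremum over $\bbR$ equals the supremum over any countable dense subset of $\bbR$, which takes care of measurability). In particular $0 \leq Y \leq 1$ almost surely. Then I would apply the layer-cake representation $\Expf Y = \int_0^\infty \Probf(Y > u)\, du$ and plug in Massart's tail bound to get
\begin{equation*}
\Expf Y \;\leq\; \int_0^\infty 2 e^{-2 n u^2} \, du.
\end{equation*}

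Finally I would evaluate the Gaussian integral: the change of variable $v = u\sqrt{2n}$ gives $\int_0^\infty e^{-2nu^2}\, du = \tfrac{1}{2}\sqrt{\pi/(2n)}$, so the right-hand side equals exactly $\sqrt{\pi/(2n)}$, which matches the claim of the theorem.

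There is essentially no obstacle to speak of, since, conditionally on Massart's tail bound, the argument is just a one-line integration. One could tighten the integrand to $\min(1, 2 e^{-2 n u^2})$ in order to exploit the a.s.\ bound $Y \leq 1$, but the cruder form already yields precisely the stated constant $\sqrt{\pi/(2n)}$, so no refinement is needed. This theorem is only recalled in the appendix because it is invoked inside the proof of Proposition~\ref{prop:distribution_free_fairness} through the inverse transform applied to $\hat F_{\hf \mid s}$; its role here is reference rather than novel derivation.
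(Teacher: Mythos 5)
Your derivation is correct: the paper states this theorem without proof, simply citing Massart's tail bound, and your layer-cake integration of $2e^{-2nu^2}$ (valid for all $u>0$ since the two-sided Massart bound is trivial when the right-hand side exceeds one) is the standard route to the expectation form and recovers the constant $\sqrt{\pi/(2n)}$ exactly.
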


\fairness*
\begin{proof}[Proof of Proposition~\ref{prop:distribution_free_fairness}]
The proof of Eq.~\eqref{eq:fair1} is based on standard results in the theory of rank statistics (see e.g.~\cite[Sec.~13]{van2000asymptotic}).
Meanwhile, the proof of Eq.~\eqref{eq:fair2} is built upon the well-known Dvoretzky–Kiefer–Wolfowitz inequality~\cite[Corollary 1]{massart1990tight}.

Notice that if $X^s \sim \Prob_{X | S = s}$ and $X^s$ is independent from labeled, unlabeled data, and the noise variables $\varepsilon_{is}, \varepsilon$, then it holds that
\begin{align*}
    \Probf(\hat g(X, S) \leq t | S = s) = \Probf(\hat g(X^{s}, s) \leq t),\quad \forall t\in \bbR \enspace.
\end{align*}

{\bf Proof of Eq.~\eqref{eq:fair1}:}
We have for all $s, s' \in \class{S}$ that
\begin{align*}
    \sup_{t \in \bbR}&\abs{\Probf(\hat g(X^{s}, s) \leq t) - \Probf(\hat g(X^{s'}, s') \leq t)}\\
    &\leq
    \sup_{t \in (0, 1)}\abs{\Probf\left(\hat{F}_{\hf | s}\left(\hf(X^{s}, s) + \varepsilon\right)\leq t\right) - \Probf\left(\hat{F}_{\hf | s'}\left(\hf(X^{s'}, s') + \varepsilon\right)\leq t \right)}\enspace,
\end{align*}
where, thanks to the form of $\hat g$ in Eq.~\eqref{eq:proposed_estimator}, the inequality follows from the fact that for all $s \in \class{S}$
\begin{align*}
    \left(\sum_{\tilde{s} \in \class{S}}\hat p_{\tilde{s}} \hat{Q}_{\hf|\tilde{s}} \right) \circ \hat{F}_{\hf | s}\left(\hf(x, s) + \varepsilon\right) \leq t  \quad\Leftrightarrow\quad \hat{F}_{\hf | s}\left(\hf(x, s) + \varepsilon\right) \leq \left(\sum_{\tilde{s} \in \class{S}}\hat p_{\tilde{s}} \hat{Q}_{\hf | \tilde{s}}\right)^{\leftarrow}(t)\enspace.
\end{align*}
Fix some $t \in (0, 1)$ and let $k_s(t) \in \{1, \ldots, |\class{I}_1^s|\}$ be such that $\tfrac{k_s(t) - 1}{|\class{I}_1^s|}\leq t < \tfrac{k_s(t)}{|\class{I}_1^s|}$, then by the definition of $\hat{F}_{\hf | s}(\cdot)$ we have
\begin{align*}
    \hat{F}_{\hf | s}\left(\hf(x, s) + \varepsilon\right)\leq t \quad\Leftrightarrow\quad \sum_{i \in \class{I}_1^s}\ind{\hf(X^{s}_i, s) + \varepsilon_{is} \leq \hf(x, s) + \varepsilon} \leq k_s(t) - 1\enspace.
\end{align*}
Notice that the random variables $\{\hf(X^{s}, s) + \varepsilon\} \cup \{\hf(X^{s}_i, s) + \varepsilon_{is}\}_{i \in \class{I}_1^s}$ conditionally on labeled data $\class{L}$ are \iid and continuous.
Thus, conditionally on $\class{L}$ the random variable $\sum_{i \in \class{I}_1^s}\ind{\hf(X^{s}_i, s) {+} \varepsilon_{is} \leq \hf(X^{s}, s) {+} \varepsilon}$ is distributed uniformly on $\{0, \ldots, |\class{I}_1^s|\}$ (see \eg~\cite[Lemma 13.1]{van2000asymptotic}), so that
\begin{align*}
    \Probf\left(\hat{F}_{\hf | s}\left(\hf(X^{s}, s) + \varepsilon\right)\leq t\right) = \frac{k_s(t)}{|\class{I}_1^s| + 1}\enspace.
\end{align*}
Repeating the same argument for $s'$ and recalling that $|\class{I}_1^s| = N_s / 2$ and $|\class{I}_1^{s'}| = N_{s'} / 2$, we get
\begin{align*}
    \sup_{t \in \bbR}\abs{\Probf(\hat g(X^{s}, s) \leq t) - \Probf(\hat g(X^{s'}, s') \leq t)}
    &\leq
    \sup_{t \in (0, 1)}\abs{\frac{k_s(t)}{N_s/2 + 1} - \frac{k_{s'}(t)}{N_{s'}/2 + 1}}\\
    &=
    2\left(N_s \wedge N_{s'} + 2\right)^{-1}\ind{N_s \neq N_{s'}}\enspace.
\end{align*}

    {\bf Proof of Eq.~\eqref{eq:fair2}:}
    Similarly, as in the proof of Eq.~\eqref{eq:fair1} we can write
    \begin{align*}
        (*) &= \sup_{t \in \bbR}\abs{\Probf(\hat g_s(X^{s}) \leq t \big| \data) - \Probf(\hat g_{s'}(X^{s'}) \leq t \big| \data)}\\
        &\leq
        \sup_{t \in (0, 1)}\abs{\Probf\left(\hat{F}_{\hf | s}\left(\hf(X^{s}, s) + \varepsilon\right)\leq t \big| \data\right) - \Probf\left(\hat{F}_{\hf | s'}\left(\hf(X^{s'}, s') + \varepsilon\right)\leq t \big| \data\right)}\enspace.
    \end{align*}
    Moreover, thanks to the triangle inequality we have
    \begin{align}
        (*) \leq
        & \sup_{t \in (0, 1)}\abs{\Probf\left(\hat{F}_{\hf | s}\left(\hf(X^{s}, s) + \varepsilon\right)\leq t \big| \data\right) - \Probf\left(F_{\bar{\nu}_{\hf | s}}\left(\hf(X^{s}, s) + \varepsilon\right)\leq t \big| \data\right)}\nonumber\\
        &+
        \sup_{t \in (0, 1)}\abs{\Probf\left(\hat{F}_{\hf | s'}\left(\hf(X^{s'}, s') + \varepsilon\right)\leq t \big| \data\right) {-} \Probf\left(F_{\bar{\nu}_{\hf | s'}}\left(\hf(X^{s'}, s') + \varepsilon\right)\leq t \big| \data\right)}\nonumber\\
        &= \sup_{t \in (0, 1)}A_s(t) + \sup_{t \in (0, 1)}A_{s'}(t)\enspace,\label{eq:DKW_fairness_intermidiate}
    \end{align}
    where for all $t\in \bbR$ and all $s \in \class{S}$ we defined
    \begin{align*}
        F_{\bar{\nu}_{\hat f | s}}(t) = \Probf\left(\hat f(X^{s}, s) + \varepsilon \leq t \big| \data\right)\enspace,
    \end{align*}
     and we used the fact that $\hf(X^{s}, s) + \varepsilon$ is continuous conditionally on all the available data $\data$, then the random variable $F_{\bar{\nu}_{\hat f | s}}\left(\hf(X^{s}, s) + \varepsilon\right)$ is distributed uniformly on $(0, 1)$ (see \eg~\cite[Lemma 21.1]{van2000asymptotic}), which means that for all $t \in (0, 1)$ and all $s, s' \in \class{S}$
    \begin{align*}
        t = \Probf\left(F_{\bar{\nu}_{\hat f | s}}\left(\hf(X^{s}, s) + \varepsilon\right)\leq t \big| \data\right) = \Probf\left(F_{\bar{\nu}_{\hat f | s'}}\left(\hf(X^{s'}, s') + \varepsilon\right)\leq t \big| \data\right)\enspace.
    \end{align*}
     We bound the first term in Eq.~\eqref{eq:DKW_fairness_intermidiate} and the bound for the second terms follows the same arguments.
    Fix some $t \in (0, 1)$, then we can write
    \begin{align*}
        A_s(t) &\leq \Probf\left(\abs{F_{\bar{\nu}_{\hat f | s}}\left(\hf(X^{s}, s) + \varepsilon\right) - t} \leq \abs{F_{\bar{\nu}_{\hat f | s}}\left(\hf(X^{s}, s) + \varepsilon\right) - \hat{F}_{\hf | s}\left(\hf(X^{s}, s) + \varepsilon\right)} \bigg| \data\right)\\
        &\leq
        \Probf\left(\abs{F_{\bar{\nu}_{\hat f | s}}\left(\hf(X^{s}, s) + \varepsilon\right) - t} \leq \norm{F_{\bar{\nu}_{\hat f | s}} - \hat{F}_{\hf | s}}_{\infty} \bigg| \data\right)\leq 2 \norm{F_{\bar{\nu}_{\hat f | s}} - \hat{F}_{\hf | s}}_{\infty} \enspace.
    \end{align*}
    Taking supremum on both sides and repeating the same argument for $s'$, we get
    \begin{align*}
        (*) \leq 2 \Expf\norm{F_{\bar{\nu}_{\hat f | s}} - \hat{F}_{\hf | s}}_{\infty} + 2 \Expf\norm{F_{\bar{\nu}_{\hat f | s'}} - \hat{F}_{\hf | s'}}_{\infty} \enspace,
    \end{align*}
    we conclude applying Dvoretzky–Kiefer–Wolfowitz inequality, recalled in Theorem~\ref{thm:DKW}, conditionally on $\class{L}$.
\end{proof}

\section{Proof of Theorem~\ref{thm:rate}}
\label{app:risk}
Let us first recall the assumptions that we require in order to prove Theorem~\ref{thm:rate}.
\density*
\concentration*

The next simple result states that Assumption~\ref{ass:concentration_estimator} yields a bound in $L_1$-norm between $f^*$ and $\hf$.
\begin{lemma}
    \label{lem:from_cprob_to_exp}
    Let Assumption~\ref{ass:concentration_estimator} be satisfied, then for all $s \in \class{S}$ it holds that
    \begin{align*}
        \Expf\left[|f^*(X, S) - \hf(X, S)| | S = s\right] \leq \mathtt{A}b_n^{-1/2}\enspace,
    \end{align*}
    with $\mathtt{A} = \tfrac{c}{2}\sqrt{\tfrac{\pi}{C}}$.
\end{lemma}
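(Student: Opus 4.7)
The plan is to reduce the claim to the standard fact that a sub-Gaussian tail bound integrates to an $L_1$ moment of order $b_n^{-1/2}$. The only subtleties are that the tail bound in Assumption~\ref{ass:concentration_estimator} is pointwise in $(x,s)$ while the lemma asks for a conditional expectation over $X \mid S=s$, and that the probability in the assumption refers to the randomness in $\hat f$ (that is, in the labeled sample $\class L$), so we must combine it with the randomness in $X$.

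First, I would fix $s \in \class{S}$ and a point $x \in \bbR^d$ in the full $\Prob_{X,S}$-measure set on which Assumption~\ref{ass:concentration_estimator} applies. Treating $|f^*(x,s) - \hat f(x,s)|$ as a non-negative random variable with respect to the randomness of $\class{L}$, the layer-cake identity combined with the assumed tail bound gives
$$\Expf|f^*(x,s) - \hat f(x,s)| = \int_0^\infty \Probf\bigl(|f^*(x,s) - \hat f(x,s)| \geq \delta\bigr)\,d\delta \leq c\int_0^\infty \exp(-Cb_n\delta^2)\,d\delta.$$
The Gaussian integral evaluates, via the substitution $u = \sqrt{Cb_n}\,\delta$, to $\tfrac{1}{2}\sqrt{\pi/(Cb_n)}$, yielding $\Expf|f^*(x,s) - \hat f(x,s)| \leq \mathtt{A}\,b_n^{-1/2}$ with $\mathtt{A} = \tfrac{c}{2}\sqrt{\pi/C}$, uniformly in $(x,s)$ on the aforementioned full-measure set.

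Finally, I would take expectation with respect to $X \mid S=s$. Since the above bound is uniform in $x$ on a set of full $\Prob_{X\mid S=s}$-measure, Tonelli's theorem (justified by non-negativity of the integrand) gives
$$\Expf\bigl[|f^*(X,S) - \hat f(X,S)| \,\big|\, S=s\bigr] = \Expf_{X \mid S=s}\Expf_{\class L}|f^*(X,s) - \hat f(X,s)| \leq \mathtt{A}\,b_n^{-1/2},$$
which is the claimed inequality. There is no real obstacle: the lemma is essentially the statement that the sub-Gaussian tail bound of Assumption~\ref{ass:concentration_estimator} implies a matching $L_1$ bound, and the constant $\mathtt{A} = \tfrac{c}{2}\sqrt{\pi/C}$ just records the Gaussian-integral prefactor.
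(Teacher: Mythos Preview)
Your proof is correct and follows essentially the same approach as the paper: both arguments combine the layer-cake representation of the expectation with the sub-Gaussian tail bound from Assumption~\ref{ass:concentration_estimator}, then evaluate the Gaussian integral to obtain the constant $\mathtt{A}=\tfrac{c}{2}\sqrt{\pi/C}$. The only cosmetic difference is the order of operations---you bound the inner expectation pointwise in $x$ and then integrate over $\Prob_{X\mid S=s}$, whereas the paper first applies Fubini and then the layer-cake formula---but the substance is identical.
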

\begin{proof}
    Applying Fubini's theorem we can write
    \begin{align*}
        \Expf\left[|f^*(X, S) - \hf(X, S)| | S = s\right]
        &=
        \int_{x \in \bbR^d} \Expf |f^*(x, s) - \hf(x, s)| \Prob_{X | S = s}(dx)\\
        &= \int_{x \in \bbR^d} \parent{\int_{0}^{+\infty} \Probf(|f^*(x, s) - \hf(x, s)| > t) dt} \Prob_{X | S = s}(dx)\\
        &\stackrel{(a)}{\leq}
        \int_{x \in \bbR^d}\parent{\int_{0}^{+\infty}c\exp\parent{-Cb_n t^2} dt} \Prob_{X | S = s}(dx)\\
        &=
        c\int_{0}^{+\infty}\exp\parent{-Cb_n t^2} dt\enspace.
    \end{align*}
    where $(a)$ follows from Assumption~\ref{ass:concentration_estimator}.
    Making change of variables we get
    \begin{align*}
        c\int_{0}^{+\infty}\exp\parent{-Cb_n t^2} dt = c(Cb_n)^{-1/2}\int_{0}^{+\infty}\exp\parent{-t^2} dt = c(Cb_n)^{-1/2}\frac{\sqrt{\pi}}{2}\enspace.
    \end{align*}
\end{proof}

We also need to define Wasserstein $1$ and $\infty$ distances.
\begin{definition}
    \label{def:wass1infty}
    Let $\mu$ and $\nu$ be two univariate probability measures, then Wasserstein $1$ and $\infty$ distance between $\mu$ and $\nu$ are defined as
    \begin{align*}
        &\class{W}_1(\mu, \nu) = \int_{0}^1 \abs{Q_{\mu}(t) - Q_{\nu}(t)}dt\quad\text{and}\quad
        \class{W}_{\infty}(\mu, \nu) = \sup_{t \in [0, 1]}\abs{Q_{\mu}(t) - Q_{\nu}(t)}\enspace,
    \end{align*}
    respectively.
\end{definition}
\begin{remark}
    The definitions of $\class{W}_1$ and $\class{W}_\infty$ can be stated in terms of couplings as it is done in Definition~\ref{def:wass}. However, for our purposes it is more convenient to use their equivalent formulation stated in Definition~\ref{def:wass1infty}. We refer to~\cite{Bobkov_Ledoux16} and in particular to their Theorem~2.10 for further details.
\end{remark}
The final ingredient is~\cite[Theorem 5.11]{Bobkov_Ledoux16}.
\begin{theorem}
    \label{thm:W_infty_bound}
    Let $Z_1, \ldots, Z_n$ be \iid real valued random variables from some probability measure $\mu$ and let $\hat \mu$ be the empirical measure based on $Z_1, \ldots, Z_n$.
    Assume that $\mu$ admits density which is lower-bounded by some constant $L > 0$, then
    \begin{align*}
        \Expf[\class{W}_{\infty}(\mu, \hat\mu)] \leq L^{-1}\sqrt{\frac{2\pi}{n}}\enspace.
    \end{align*}
\end{theorem}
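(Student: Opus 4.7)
The plan is to exploit the fact that a density lower-bounded by $L$ makes the CDF $F_\mu$ strictly increasing with slope at least $L$, hence the quantile map $Q_\mu$ is $L^{-1}$-Lipschitz on $(0,1)$. Under the bi-Lipschitz link between $F_\mu$ and $Q_\mu$, supremum deviations of quantiles reduce to supremum deviations of CDFs, and the latter are controlled by the Dvoretzky--Kiefer--Wolfowitz inequality (Theorem~\ref{thm:DKW}). I will first perform this reduction pointwise in $t$, and then take expectations.

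Concretely, fix $t \in (0,1)$ and set $y_t = Q_{\hat{\mu}}(t)$. By the lower bound on the density, for every $x \leq z$,
\begin{align*}
F_\mu(z) - F_\mu(x) \;\geq\; L\,(z-x)\enspace,
\end{align*}
so $F_\mu$ is strictly increasing and $F_\mu \circ Q_\mu = \mathrm{id}$ on $(0,1)$. Applying this with $\{x,z\} = \{y_t, Q_\mu(t)\}$ in the appropriate order yields
\begin{align*}
\bigl|Q_\mu(t) - Q_{\hat{\mu}}(t)\bigr| \;\leq\; L^{-1}\,\bigl|F_\mu(Q_\mu(t)) - F_\mu(y_t)\bigr| \;=\; L^{-1}\,\bigl|t - F_\mu(y_t)\bigr|\enspace.
\end{align*}
Next I insert $F_{\hat{\mu}}(y_t)$ via the triangle inequality:
\begin{align*}
\bigl|t - F_\mu(y_t)\bigr| \;\leq\; \bigl|t - F_{\hat{\mu}}(y_t)\bigr| + \bigl|F_{\hat{\mu}}(y_t) - F_\mu(y_t)\bigr| \;\leq\; \tfrac{1}{n} + \|F_{\hat{\mu}} - F_\mu\|_\infty\enspace.
\end{align*}
Here $|t - F_{\hat{\mu}}(y_t)| \leq 1/n$ holds because, by definition of the generalized inverse, $F_{\hat{\mu}}(y_t-) \leq t \leq F_{\hat{\mu}}(y_t)$, and since $\mu$ has a density the $Z_i$'s are almost surely distinct, so the jump of $F_{\hat{\mu}}$ at any point is at most $1/n$.

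Taking the supremum over $t$ gives $\mathcal{W}_\infty(\mu, \hat{\mu}) \leq L^{-1}\bigl(n^{-1} + \|F_{\hat{\mu}} - F_\mu\|_\infty\bigr)$ almost surely; taking expectations and invoking Theorem~\ref{thm:DKW} yields
\begin{align*}
\Expf\bigl[\mathcal{W}_\infty(\mu, \hat{\mu})\bigr] \;\leq\; L^{-1}\left(\tfrac{1}{n} + \sqrt{\tfrac{\pi}{2n}}\right) \;\leq\; L^{-1}\sqrt{\tfrac{2\pi}{n}}\enspace,
\end{align*}
where the last inequality uses $1/n \leq \sqrt{\pi/(2n)}$ for $n \geq 1$ (equivalently $n \geq 2/\pi$). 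The only step that requires genuine care is the $1/n$ bound on $|t - F_{\hat{\mu}}(y_t)|$: one must handle both the convention for the generalized inverse at jump points and the almost-sure distinctness of the sample; I expect this to be the main technical point, while everything else reduces to the bi-Lipschitz transfer and DKW.
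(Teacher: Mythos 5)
Your argument is correct, and it is essentially the standard proof of this result (the paper itself does not prove Theorem~\ref{thm:W_infty_bound}; it simply cites \cite[Theorem 5.11]{Bobkov_Ledoux16}): transfer the sup-deviation of quantiles to the sup-deviation of CDFs via the $L^{-1}$-Lipschitz property of $Q_\mu$ induced by the density lower bound, then invoke DKW in expectation. The one place where your route is slightly lossier than necessary is the insertion of $F_{\hat\mu}(y_t)$ with the extra $1/n$: if instead you compare $t$ to the left limit $F_{\hat\mu}(y_t-)\leq t\leq F_{\hat\mu}(y_t)$ and run the Lipschitz transfer through a point $y'<y_t$ (where $F_{\hat\mu}(y')<t$), you get the clean almost-sure bound $\class{W}_\infty(\mu,\hat\mu)\leq L^{-1}\normin{F_{\hat\mu}-F_\mu}_\infty$ with no additive term, hence the sharper constant $L^{-1}\sqrt{\pi/(2n)}$; the stated bound $L^{-1}\sqrt{2\pi/n}$ has exactly the factor-$2$ headroom that absorbs your $1/n$, so your version still closes, as you verified. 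Two minor points you implicitly (and legitimately) rely on: the density can only be lower bounded by $L>0$ on an interval support, so the inequality $F_\mu(z)-F_\mu(x)\geq L(z-x)$ is used only for $x,z$ in that support, where both $Q_\mu(t)$ and $Q_{\hat\mu}(t)$ live almost surely; and the supremum in Definition~\ref{def:wass1infty} is over $[0,1]$, but the endpoints are handled by the stated convention $Q(0)=Q(0+)$ and by $F_{\hat\mu}(Q_{\hat\mu}(1))=1$.
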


\riskbound*
\begin{proof}[Proof of Theorem~\ref{thm:rate}]
    In the proof $a > 0$ is going to denote an absolute constant independent from the size of data, which can differ from line to line.
    First of all, define the random variable
    \begin{align*}
        \Delta(\hat g) = \Exp\abs{\hat g(X, S) - g^*(X, S)} = \sum_{s \in \class{S}}p_s \Exp\left[\abs{\hat g(X, s) - g^*(X, s)} | S = s\right]\enspace,
    \end{align*}
    where $\Exp$ stands for the expectation \wrt the joint distribution of $(X, S, Y)$.
    Recall that
    \begin{align*}
        g^*(x, s) = \sum_{s' \in \class{S}}p_{s'}Q_{f^* | s'}\left(F_{f^* | s}\left(f^*(x, s)\right)\right) \,\,\text{and}\,\,
        \hat g(x, s) = \sum_{s' \in \class{S}}\hat p_{s'} \hat{Q}_{\hf | s'}\left(\hat{F}_{\hf | s}\left(\hf(x, s) + \varepsilon\right) \right)\enspace.
    \end{align*}
    Considering $g^*(x, s)$ first, we can state that
    \begin{align*}
        \abs{g^*(x, s) - \sum_{s' \in \class{S}}\hat p_{s'}Q_{f^* | s'}\left(F_{f^* | s}\left(f^*(x, s)\right)\right)} \leq \sum_{s' \in \class{S}}\abs{p_{s'} - \hat p_{s'}} \times \abs{Q_{f^* | s'} \circ F_{f^* | s} \circ f^*(x, s)}\enspace.
    \end{align*}
    It is clear that if we can find a bound on $|f^*(x, s')|$ which holds for almost all $x$ \wrt $\Prob_{X | S = s'}$, it would imply an upper bound on $|Q_{f^* | s'} (t)|$ for all $t \in [0, 1]$.
    Fix some $a > 0$, then on the one hand for all $s' \in \class{S}$
    \begin{align*}
       \Prob(|f^*(X, S)| \leq a | S = s') \leq 1\enspace,
    \end{align*}
    on the other hand under Assumption~\ref{ass:density_rate} we can write for all $s' \in \class{S}$
    \begin{align*}
       \Prob(|f^*(X, S)| \leq a | S {=} s') = \int_{|f^*(x, s')| \leq a} \Prob_{X | S = s'}(dx) = \int_{|t| \leq a} q_{s'}(t) dt \geq \underline{\lambda}_{s'}\int_{|t| \leq a} dt = 2a\underline{\lambda}_{s'}\enspace,
    \end{align*}
    which implies that $a \leq 1/(2\underline{\lambda}_{s'})$ and therefore $|f^*(x, s')| \leq 1/(2\underline{\lambda}_{s'})$ for almost all $x \in \bbR^d$ \wrt $\Prob_{X | S = s'}$.
    Hence, we can write for all $(x, s) \in \bbR^d \times \class{S}$
    \begin{align*}
        \abs{g^*(x, s) - \sum_{s' \in \class{S}}\hat p_{s'}Q_{f^* | s'}\left(F_{f^* | s}\left(f^*(x, s)\right)\right)} \leq \frac{1}{2}\sum_{s' \in \class{S}}\underline{\lambda}_{s'}^{-1}\abs{p_{s'} - \hat p_{s'}}\enspace.
    \end{align*}
    The above implies that
    \begin{align*}
         \Delta(\hat g) \leq &\sum_{s \in \class{S}}p_s\sum_{s' \in \class{S}}\hat{p}_{s'}\Exp\left[\abs{Q_{f^* | s'}\left(F_{f^* | s}\left(f^*(X, S)\right)\right) - \hat{Q}_{\hf | s'}\left(\hat{F}_{\hf | s}\left(\hf(X, s) + \varepsilon\right) \right)} | S = s\right]\\
         +
         &\frac{1}{2}\sum_{s \in \class{S}}\underline{\lambda}_s^{-1}\abs{p_s - \hat p_s}\enspace.
    \end{align*}
    Taking the total expectation we arrive at
    \begin{align*}
         \Expf[\Delta(\hat g)] \leq &\sum_{s, s' \in \class{S}}p_sp_{s'}\Expf\left[\abs{Q_{f^* | s'}\left(F_{f^* | s}\left(f^*(X, S)\right)\right) - \hat{Q}_{\hf | s'}\left(\hat{F}_{\hf | s}\left(\hf(X, S) + \varepsilon\right) \right)} | S = s\right]\\
         +
         &\frac{1}{2}\sum_{s \in \class{S}}\underline{\lambda}_s^{-1}\Expf\abs{p_s - \hat p_s}\enspace,
    \end{align*}
    where we used the fact that $\hat p_s$ is an unbiased estimator of $p_s$.
    For all $s \in \class{S}$ let $X^{s} \sim \Prob_{X | S = s}$ be independent from everything, for all $s', s \in \class{S}$ set the shorthand notation
    \begin{align*}
        \mathbbm{a}_{s s'}
        = \Expf\abs{Q_{f^* | s'}\left(F_{f^* | s}\left(f^*(X^{s}, s)\right)\right) - \hat{Q}_{\hf | s'}\left(\hat{F}_{\hf | s}\left(\hf(X^{s}, s) + \varepsilon\right) \right)}\enspace.
    \end{align*}
    Notice that
    \begin{align*}
        \mathbbm{a}_{s s'} = \Expf\left[\abs{Q_{f^* | s'}\left(F_{f^* | s}\left(f^*(X, S)\right)\right) - \hat{Q}_{\hf | s'}\left(\hat{F}_{\hf | s}\left(\hf(X, S) + \varepsilon\right) \right)} | S = s\right]\enspace,
    \end{align*}
    and therefore we can write
    \begin{align*}
        \Expf\abs{\hat g(X, S) - g^*(X, S)}= \Expf[\Delta(\hat g)]  \leq \sum_{s, s' \in \class{S}}p_sp_{s'}\mathbbm{a}_{ss'} + \frac{1}{2}\sum_{s \in \class{S}}\underline{\lambda}_s^{-1}\Expf\abs{p_s - \hat p_s}\enspace.
    \end{align*}
    Notice that the term $\Expf\abs{p_s - \hat p_s} = N^{-1}\Expf|Np_s - V|$, where $V$ is the binomial random variable with parameters $(N, p_s)$, thus using the Cauchy–Schwarz inequality we can write $\Expf\abs{p_s - \hat p_s} \leq N^{-1} \sqrt{\Var(V)} = \sqrt{p_s(1 - p_s) / N}$  and the above bound reads as
    \begin{align}
        \Expf\abs{\hat g(X, S) - g^*(X, S)}
        &\leq
        \sum_{s, s' \in \class{S}}p_sp_{s'}\mathbbm{a}_{ss'} + \frac{1}{2}\sum_{s \in \class{S}}\underline{\lambda}_s^{-1}\sqrt{\frac{p_s(1 - p_s)}{N}}\nonumber\\
        &\leq
        \sum_{s, s' \in \class{S}}p_sp_{s'}\mathbbm{a}_{ss'} + \frac{N^{-1/2}}{2}\max_{s \in \class{S}}\underline{\lambda}_s^{-1}\sum_{s \in \class{S}}\sqrt{p_s(1 - p_s)}\enspace\label{eq:norm_bound_via_a} \enspace.
    \end{align}
    It remains to bound $\mathbbm{a}_{s s'}$ for each $s, s' \in \class{S}$.
    Fix some $s, s' \in \class{S}$ (they can be equal), then
    \begin{align}
        \mathbbm{a}_{s s'}
        \leq
        &\underbrace{\Expf\abs{\hat{Q}_{f^* | s'}\left(\hat{F}_{\hf | s}\left(\hf(X^{s}, s) + \varepsilon\right)\right) - \hat{Q}_{\hf | s'}\left(\hat{F}_{\hf | s}\left(\hf(X^{s}, s) + \varepsilon\right) \right)}}_{\mathbbm{a}_{s s'}^1}\nonumber\\
        &+
        \underbrace{\Expf\abs{Q_{f^* | s'}\left(\hat{F}_{\hf | s}\left(\hf(X^{s}, s) + \varepsilon\right)\right) - \hat{Q}_{f^* | s'}\left(\hat{F}_{\hf | s}\left(\hf(X^{s}, s) + \varepsilon\right)\right)}}_{\mathbbm{a}_{s s'}^2}\label{eq:a_one_two_three_bound}\\
        &+
        \underbrace{\Expf\abs{Q_{f^* | s'}\left(F_{f^* | s}\left(f^*(X^{s}, s)\right)\right) - Q_{f^* | s'}\left(\hat{F}_{\hf | s}\left(\hf(X^{s}, s) + \varepsilon\right)\right)}}_{\mathbbm{a}_{s s'}^3}\enspace.\nonumber
    \end{align}
    We bound each of the three terms separately.

    {\bf First term ($\mathbbm{a}_{s s'}^1$):}
    Notice that $\hat{F}_{\hf | s}\left(\hf(X^{s}, s) + \varepsilon\right)$ is distributed uniformly on $\{0, 1 / |\class{I}_1^s|, 2 / |\class{I}_1^s|, \ldots, 1\}$ conditionally on labeled data $\class{L}$ (see \eg~\cite[Lemma 13.1]{van2000asymptotic}).
    Thus, we have
    \begin{align}
        \label{eq:a_one_intermidiate}
        \mathbbm{a}_{s s'}^1
        &=
        \frac{1}{|\class{I}_1^s| + 1}\sum_{j = 0}^{|\class{I}_1^s|} \Expf\abs{\hat{Q}_{f^* | s'}\left(\frac{j}{|\class{I}_1^s|}\right) - \hat{Q}_{\hf | s'}\left(\frac{j}{|\class{I}_1^s|} \right)}\enspace.
    \end{align}
    Notice that for all $j \in \{1, \ldots, |\class{I}_1^s|\}$ and all $\alpha \in ((j - 1) / |\class{I}_1^s| , j / |\class{I}_1^s| ]$ it holds that
    \begin{align*}
        \hat{Q}_{f^* | s'}\left(\frac{j}{|\class{I}_1^s|}\right) = \hat{Q}_{f^* | s'}\left(\alpha\right)\enspace.
    \end{align*}
    The above implies that
    \begin{align}
        \label{eq:f_hat_discrete_quantile}
        \frac{1}{|\class{I}_1^s|}\hat{Q}_{f^* | s'}\left(\frac{j}{|\class{I}_1^s|}\right) = \int_{j / |\class{I}_1^s|}^{(j + 1) / |\class{I}_1^s|}\hat{Q}_{f^* | s'}\left(\alpha\right) d\alpha\enspace,
    \end{align}
    and the same argument repeated for 
    $\hat{Q}_{\hf | s'}$ implies that
    \begin{align}
        \label{eq:f_star_discrete_quantile}
        \frac{1}{|\class{I}_1^s|}\hat{Q}_{\hf | s'}\left(\frac{j}{|\class{I}_1^s|}\right) = \int_{j / |\class{I}_1^s|}^{(j + 1) / |\class{I}_1^s|}\hat{Q}_{\hf | s'}\left(\alpha\right) d\alpha\enspace.
    \end{align}
    Substituting Eqs.~\eqref{eq:f_hat_discrete_quantile}-\eqref{eq:f_star_discrete_quantile} in Eq.~\eqref{eq:a_one_intermidiate} and using Definition~\ref{def:wass1infty} we get
    \begin{align*}
        \mathbbm{a}_{s s'}^1
        \leq
         2 \Expf\int_{0}^1\abs{\hat{Q}_{f^* | s'}\left(\alpha\right) - \hat{Q}_{\hf | s'}\left(\alpha \right)}d\alpha
        =
        2 \Expf \class{W}_1(\hnu^0_{f^* | s'}, \hnu^0_{\hf | s'})\enspace,
        \end{align*}
        where for $j = 0$ in Eq.~\eqref{eq:a_one_intermidiate} we used the fact that $\tfrac{1}{|\class{I}^s_1|}\Expf\absin{\hat{Q}_{f^* | s'}\left(0\right) - \hat{Q}_{\hf | s'}\left(0 \right)} \leq \Expf\int_{0}^1\absin{\hat{Q}_{f^* | s'}\left(\alpha\right) - \hat{Q}_{\hf | s'}\left(\alpha \right)}d\alpha$.
        Using the coupling definition of the Wasserstein distance and the way we have defined $\hnu^0_{f | s'}$, we can write
        \begin{align*}
            \class{W}_1(\hnu^0_{f^* | s'}, \hnu^0_{\hf | s'}) \leq \frac{1}{|\class{I}_0^{s'}|}\sum_{i \in \class{I}_0^{s'}}\abs{f^*(X^{s'}_i, s') + \varepsilon_{is'}  - (\hf(X^{s'}_i, s') + \varepsilon_{is'})}\enspace,
        \end{align*}
        almost surely.
        Since $\{X_i^{s'}\}_{i \in \class{I}_0^{s'}}$ are \iid from $\Prob_{X | S = s'}$, then conditionally on $\class{L}$ the random variables $\{|{f^*(X^{s'}_i, s) - \hf(X^{s'}_i, s')}|\}_{i \in \class{I}_0^{s'}}$ are \iid.
        Furthermore, using Lemma~\ref{lem:from_cprob_to_exp} we can write
        \begin{align}
        \label{eq:a_one_final}
           \mathbbm{a}_{s s'}^1 \leq  2 \Expf\class{W}_1(\hnu^0_{f^* | s'}, \hnu^0_{\hf | s'}) \leq 2\Expf\left[\abs{f^*(X, S) - \hf(X, S)} \big| S = s'\right] \stackrel{\text{Lemma~\ref{lem:from_cprob_to_exp}}}{\leq} 2\mathtt{A}b_n^{-1/2}\enspace.
        \end{align}

    {\bf Second term ($\mathbbm{a}_{s s'}^2$):} Note that under Assumption~\ref{ass:density_rate}, the Lipschitz constant of $Q_{f^* | s'}$ is upper bounded by $\underline{\lambda}_{s'}^{-1}$. Then, taking supremum and using Definition~\ref{def:wass1infty} we apply Theorem~\ref{thm:W_infty_bound} to get
    \begin{align}
        \label{eq:a_two_final}
        \mathbbm{a}_{s s'}^2 \leq \Expf\class{W}_{\infty}\parent{\nu_{f^*|s'}, \hat{\nu}^0_{f^*|s'}} \leq a \underline{\lambda}_{s'}^{-1} N_{s'}^{-1/2}\enspace.
    \end{align}
    where $a$ is an absolute positive constant ($a = 2\sqrt{2\pi}$ is sufficient).

    {\bf Third term ($\mathbbm{a}_{s s'}^3$):} We can write, using Assumption~\ref{ass:density_rate} that
    \begin{align}
        \mathbbm{a}_{s s'}^3
        \leq
        &\underline{\lambda}^{-1}_{s'}\Expf\abs{F_{f^* | s}\left(f^*(X^{s}, s)\right) - \hat{F}_{\hf | s}\left(\hf(X^{s}, s) + \varepsilon\right)}\nonumber\\
        \leq
        &\underline{\lambda}^{-1}_{s'}\Expf\abs{F_{f^* | s}\left(f^*(X^{s}, s)\right) {-} F_{\bar\nu_{\hf | s}}\left(\hf(X^{s}, s) {+} \varepsilon\right)}
        + \underline{\lambda}^{-1}_{s'}\Expf\normin{F_{\bar\nu_{\hf | s}}(t) - \hat{F}_{\hf | s}\left(t\right)}_{\infty}\enspace,
        \label{eq:a_three_intermidiate}
    \end{align}
    with $F_{\bar\nu_{\hf | s}}$ defined for all $s\in \class{S}$ and all $t \in \bbR$ as
    \begin{align}
         \label{eq:cdf_noisy}
        F_{\bar{\nu}_{\hf | s}}(t) = \Probf\left(\hf(X^{s}, s) + \varepsilon \leq t \big| \class{L}\right)\enspace.
    \end{align}
    The second term in Eq.~\eqref{eq:a_three_intermidiate} is bounded by 
    $\lesssim 2 \underline{\lambda}^{-1}_{s'} N_s^{-1/2}$ thanks to the Dvoretzky–Kiefer–Wolfowitz inequality recalled in Theorem~\ref{thm:DKW}.
     Thus, it remains to bound the first term in Eq.~\eqref{eq:a_three_intermidiate}.
     We introduce the following shorthand notation for the first term in Eq.~\eqref{eq:a_three_intermidiate}
     \begin{align*}
         (*) = \Expf\abs{F_{f^* | s}\left(f^*(X^{s}, s)\right) - F_{\bar\nu_{\hf | s}}\left(\hf(X^{s}, s) + \varepsilon\right)}\enspace.
     \end{align*}
     Let $\tilde{X}^s \sim \Prob_{X | S = s}$ and $\tilde\varepsilon \sim U[-\sigma, \sigma]$ be independent from $\varepsilon, X^s$, $\class{L}$ and each other.
     Based on this notation we can write
     \begin{small}
     \begin{align}
        \label{eq:term_three_intr0}
         (*) {=} \Expf\bigg|\underbrace{{\Probf\left( f^*(\tilde{X}^{s}, s) {-} f^*(X^s, s){\leq} 0\big| \varepsilon, X^s, \class{L}\right)}}_{H_0} {-} \underbrace{\Probf\left( \hf(\tilde{X}^{s}, s) {+} \tilde\varepsilon {\leq} \hf({X}^{s}, s) {+} \varepsilon\big| \varepsilon, X^s, \class{L}\right)}_{H_1}\bigg|\enspace.
     \end{align}
     \end{small}
     Furthermore, if $\Delta({X}^{s}) = f^*({X}^{s}, s) - \hf({X}^{s}, s)$, $\Delta(\tilde{X}^{s}) = f^*(\tilde{X}^{s}, s) - \hf(\tilde{X}^{s}, s)$, and $\Delta_{\varepsilon} = \varepsilon - \tilde\varepsilon$, then simple algebra yields
     \begin{align*}
         H_1 = \Probf\left( f^*(\tilde{X}^{s}, s) - f^*(X^{s}, s) \leq \Delta_{\varepsilon} + \Delta(\tilde{X}^{s}) - \Delta(X^s)\big| \varepsilon, X^s, \class{L}\right)\enspace.
     \end{align*}
    For all $a, b \in \bbR$ it holds that
         $|\ind{a \leq 0} - \ind{a \leq b}| \leq \ind{0\wedge b \leq a \leq 0\vee b} \leq \ind{-|b| \leq a \leq |b|} = \ind{|a| \leq |b|}$.
     Applying this fact to Eq.~\eqref{eq:term_three_intr0} with $a = f^*(\tilde{X}^{s}, s) - f^*(X^{s}, s)$ and $b = \Delta_{\varepsilon} + \Delta(\tilde{X}^{s}) - \Delta(X^s)$ we get
     \begin{align*}
         (*) \leq
         &\Probf\left(\abs{f^*(\tilde{X}^{s}, s) - f^*(X^{s}, s)} \leq \big|{\Delta_{\varepsilon} }\big| + \big|{\Delta(\tilde{X}^{s})}\big| + \big|{\Delta({X}^{s})}\big|\right)\\
         \leq
         &\Probf\left(\abs{f^*(\tilde{X}^{s}, s) - f^*(X^{s}, s)} \leq  3 \big|{\Delta_{\varepsilon} }\big|\right)
         +
         \Probf\left(\abs{f^*(\tilde{X}^{s}, s) - f^*(X^{s}, s)} \leq 3 \big|{\Delta(\tilde{X}^{s})}\big|\right)\\
         &+
         \Probf\left(\abs{f^*(\tilde{X}^{s}, s) - f^*(X^{s}, s)} \leq 3 \big|{\Delta({X}^{s})}\big|\right)\enspace.
     \end{align*}
     By definition of $\tilde{X}^s$ the random variables $X^s, \tilde{X}^s$ are exchangeable, hence
     \begin{align*}
         \Probf(|{f^*(\tilde{X}^{s}, s) - f^*(X^{s}, s)}| \leq 3 |{\Delta(\tilde{X}^{s})}|) = \Probf(|{f^*(\tilde{X}^{s}, s) - f^*(X^{s}, s)}| \leq 3 |{\Delta({X}^{s})}|)\enspace.
     \end{align*}
     Furthermore, using the fact that $|\varepsilon - \tilde\varepsilon| \leq 2 \sigma$ almost surely we get
     \begin{align}
        \label{eq:third_temr_interm}
         (*) \leq \Probf\left(\abs{f^*(\tilde{X}^{s}, s) {-} f^*(X^{s}, s)} \leq 6\sigma\right) + 2 \Probf\left(\abs{f^*(\tilde{X}^{s}, s) {-} f^*(X^{s}, s)} \leq 3 \big|{\Delta({X}^{s})}\big|\right)\enspace.
     \end{align}
    Thanks to Assumption~\ref{ass:density_rate} we have the following bound on the first term in Eq.~\eqref{eq:third_temr_interm}
     \begin{align*}
         \Probf\left(\absin{f^*(\tilde{X}^{s}, s) - f^*(X^{s}, s)} \leq 6\sigma\right) \leq  \Expf\left[\Probf\left(\absin{f^*(\tilde{X}^{s}, s) - f^*(X^{s}, s)} \leq 6\sigma | X^s\right)\right] \leq 12\overline{\lambda}_s \sigma\enspace.
     \end{align*}
    For the second term in Eq.~\eqref{eq:third_temr_interm}, we observe that Assumption~\ref{ass:density_rate} yields almost surely
    \begin{align*}
        \Probf\left(\absin{f^*(\tilde{X}^{s}, s) - f^*(X^{s}, s)} \leq 3 \big|{\Delta({X}^{s})}\big| \big| \class{L}, X^s\right) \leq 6 \overline{\lambda}_s\big|{\Delta({X}^{s})}\big|\enspace.
    \end{align*}
    Thus, taking the total expectation on both sides of this inequality we get
    \begin{align*}
        \Probf\left(\absin{f^*(\tilde{X}^{s}, s) - f^*(X^{s}, s)} \leq 3\big|{\Delta({X}^{s})}\big|\right) \leq 6\overline{\lambda}_s\Expf \big|{\Delta({X}^{s})}\big| \stackrel{\text{Lemma~\ref{lem:from_cprob_to_exp}}}{\leq} 6\overline{\lambda}_s\mathtt{A}b_n^{-1/2}\enspace.
    \end{align*}
    Since $\sigma \lesssim  b_{n}^{-1/2}$, then we have demonstrated that $(*) \lesssim \overline{\lambda}_sb_n^{-1/2}$.
   Substituting this bound into Eq.~\eqref{eq:a_three_intermidiate}, we derive that
    \begin{align}
        \label{eq:a_three_final}
        \mathbbm{a}^3_{s s'} \lesssim \underline{\lambda}^{-1}_{s'}\overline{\lambda}_{s}b_{n}^{-1/2} + \underline{\lambda}^{-1}_{s'}N_s^{-1/2}\enspace.
    \end{align}
    {\bf Gathering three terms together:} Finally, substituting Eqs.~\eqref{eq:a_one_final},~\eqref{eq:a_two_final},~\eqref{eq:a_three_final} into Eq.~\eqref{eq:a_one_two_three_bound} we get
    \begin{align*}
        \mathbbm{a}_{s s'}
        &\lesssim
        b_{n}^{-1/2} + \underline{\lambda}^{-1}_{s'}\overline{\lambda}_{s}b_{n}^{-1/2} + \underline{\lambda}_{s'}^{-1} N_{s'}^{-1/2} + \underline{\lambda}^{-1}_{s'}N_s^{-1/2}\enspace.
    \end{align*}
    Finally, substituting the bound above into Eq.~\eqref{eq:norm_bound_via_a} we arrive at
    \begin{align*}
          \Expf\abs{\hat g(X, S) - g^*(X, S)}
          \lesssim
          &b_{n}^{-1/2} + \left(\sum_{s \in \class{S}}p_s\underline{\lambda}_{s}^{-1}\right) \left(\sum_{s \in \class{S}}p_s \overline{ \lambda}_{s}\right)b_{n}^{-1/2}\\
          &+
          \sum_{s \in \class{S}} p_s \underline{\lambda}_{s}^{-1} N_{s}^{-1/2} + \left(\sum_{s \in \class{S}}p_s\underline{\lambda}_{s}^{-1}\right)\left(\sum_{s \in \class{S}}p_sN_s^{-1/2}\right)\\
          &+
          N^{-1/2}\max_{s \in \class{S}}\underline{\lambda}_s^{-1}\sum_{s \in \class{S}}\sqrt{p_s(1 - p_s)}\\
          \lesssim
          &b_{n}^{-1/2} + \sum_{s \in \class{S}} p_s N_{s}^{-1/2} + \sqrt{|\class{S}|}N^{-1/2}\enspace,
    \end{align*}
    where in the last inequality we used the fact that
    \begin{align*}
    \sum_{s \in \class{S}}\sqrt{p_s(1 - p_s)}
    \leq
    \sum_{s \in \class{S}}\sqrt{p_s} \leq \sqrt{|\class{S}|}\sqrt{\sum_{s \in \class{S}}p_s} = \sqrt{|\class{S}|}\enspace.
    \end{align*}
    This ends the proof.
\end{proof}
\begin{remark}
    Notice that the exact constant in front of the rate of convergence in Theorem~\ref{thm:rate} can be recovered following the proof.
    Furthermore, this proof can be extended to control $L_p$ norm
    \begin{align*}
        \parent{\Expf|g^*(X, S) - \hat g(X, S)|^p}^{1/p}\enspace,
    \end{align*}
    for all $p \in [1, \infty)$ (the current proof deals only with $p = 1$).
    To achieve it one only needs to extend Lemma~\ref{lem:from_cprob_to_exp} while the rest of the proof follows line-by-line using deviation results on Wasserstein-$p$ distance on the real line~\cite{Bobkov_Ledoux16}.
    Finally, it is possible to extend this result under the same assumptions to  control $\Expf \norm{g^* - \hat g}_{\infty}$, which induces an extra multiplicative polylogarithmic factor in $b_n^{-1/2}$.
\end{remark}

\end{document}